\def\eqref#1{equation~\ref{#1}}
\def\1{\bm{1}}
\DeclareMathAlphabet{\mathsfit}{\encodingdefault}{\sfdefault}{m}{sl}
\SetMathAlphabet{\mathsfit}{bold}{\encodingdefault}{\sfdefault}{bx}{n}
\newtheorem{lemma}{Lemma}
\title{Similarity-Dissimilarity Loss for Multi-label Supervised Contrastive Learning}
\author{\name Guangming Huang \email 	guangming.huang22@gmail.com \\
      \addr University of Essex
      \AND
      \name Yunfei Long \email yunfei.long@qmul.ac.uk \\
      \addr Queen Mary University of London
      \AND
      \name Cunjin Luo \email cunjin.luo@essex.ac.uk \\
      \addr University of Essex
      }
\begin{document}

\maketitle

\begin{abstract}
Supervised contrastive learning has achieved remarkable success by leveraging label information; however, determining positive samples in multi-label scenarios remains a critical challenge. In multi-label supervised contrastive learning (MSCL), multi-label relations are not yet fully defined, leading to ambiguity in identifying positive samples and formulating contrastive loss functions to construct the representation space. To address these challenges, we: (i) systematically formulate multi-label relations in MSCL, (ii) propose a novel \textit{Similarity-Dissimilarity Loss}, which dynamically re-weights samples based on similarity and dissimilarity factors, (iii) further provide theoretically grounded proofs for our method through rigorous mathematical analysis that supports the formulation and effectiveness, and (iv) offer a unified form and paradigm for both single-label and multi-label supervised contrastive loss. We conduct experiments on both image and text modalities and further extend the evaluation to the medical domain. The results show that our method consistently outperforms baselines in comprehensive evaluations, demonstrating its effectiveness and robustness. 
% Moreover, the proposed approach achieves state-of-the-art performance on MIMIC-III-Full.
% Code is available at: \url{https://github.com/guangminghuang/similarity-dissimilarity-loss}.
% We evaluate our approach on both image and text modalities, extending to the medical domain, where it consistently outperforms strong baselines and achieves state-of-the-art results on the MIMIC datasets.
\end{abstract}

\section{Introduction}

While supervised contrastive learning effectively leverages label information to achieve promising results in single-label scenarios \citep{khosla2020supervised,zhang2022use,lin2023effective}, identifying positive samples in multi-label supervised contrastive learning (MSCL) remains a fundamental challenge \citep{zhang2024multi}.
For example, consider a set of images containing cats and puppies, wherein an anchor image depicts a cat; in the single-label paradigm, positive and negative instances can be unambiguously delineated based on their corresponding taxonomic annotations. Conversely, MSCL introduces inherent classification ambiguity when determining whether an image containing both cats and puppies should be designated as a positive or negative sample in relation to the anchor.

A critical question arises: \textit{Should a sample be considered positive if its label set partially overlaps with or exactly matches that of the anchor?} Currently, three principal strategies\footnote{We discuss other related methods in Appendix \ref{sec:related work}.} exist for identifying positive samples in multi-label scenarios: (i) \textit{ALL} considers a sample positive only if its label set matches exactly; (ii) \textit{ANY} treats samples with any overlapping class as positive; and (iii) \textit{MulSupCon} \citep{zhang2024multi} conceptually aligns with the ANY approach but treats each label independently, thereby generating multiple distinct positive sets for individual anchors.

However, these methods have inherent limitations, since previous research has overlooked the complicated multi-label relations among samples in MSCL. As illustrated in Figure \ref{figure-relation}, we introduce five distinct set relations among samples to facilitate a more comprehensive identification of positive sets. The ALL method exclusively considers relation $R_2$ while disregarding the potential contributions of $R_3$, $R_4$ and $R_5$. 
Furthermore, under long-tailed distributions, when tail samples serve as anchors, the requirement of ALL for exact label matches significantly impedes these tail anchors from identifying adequate positive samples within a limited batch size, potentially degenerating the method to unsupervised contrastive learning in extreme scenarios \citep{chen2020simple,he2020momentum,zhang2023deep}. Conversely, both ANY and MulSupCon approaches treat relations $R_2$, $R_3$, $R_4$, and $R_5$ identically with equivalent weights in contrastive loss functions, which constitutes a suboptimal approach given the inherent differences among these relations. A detailed mathematical analysis of the limitations for these methods is presented in Section \ref{sec:methods} and additional discussion of related work in Appendix \ref{sec:related work}.

% figure-relation
\begin{figure*}[t]
\centering
\includegraphics[width=0.65\textwidth]{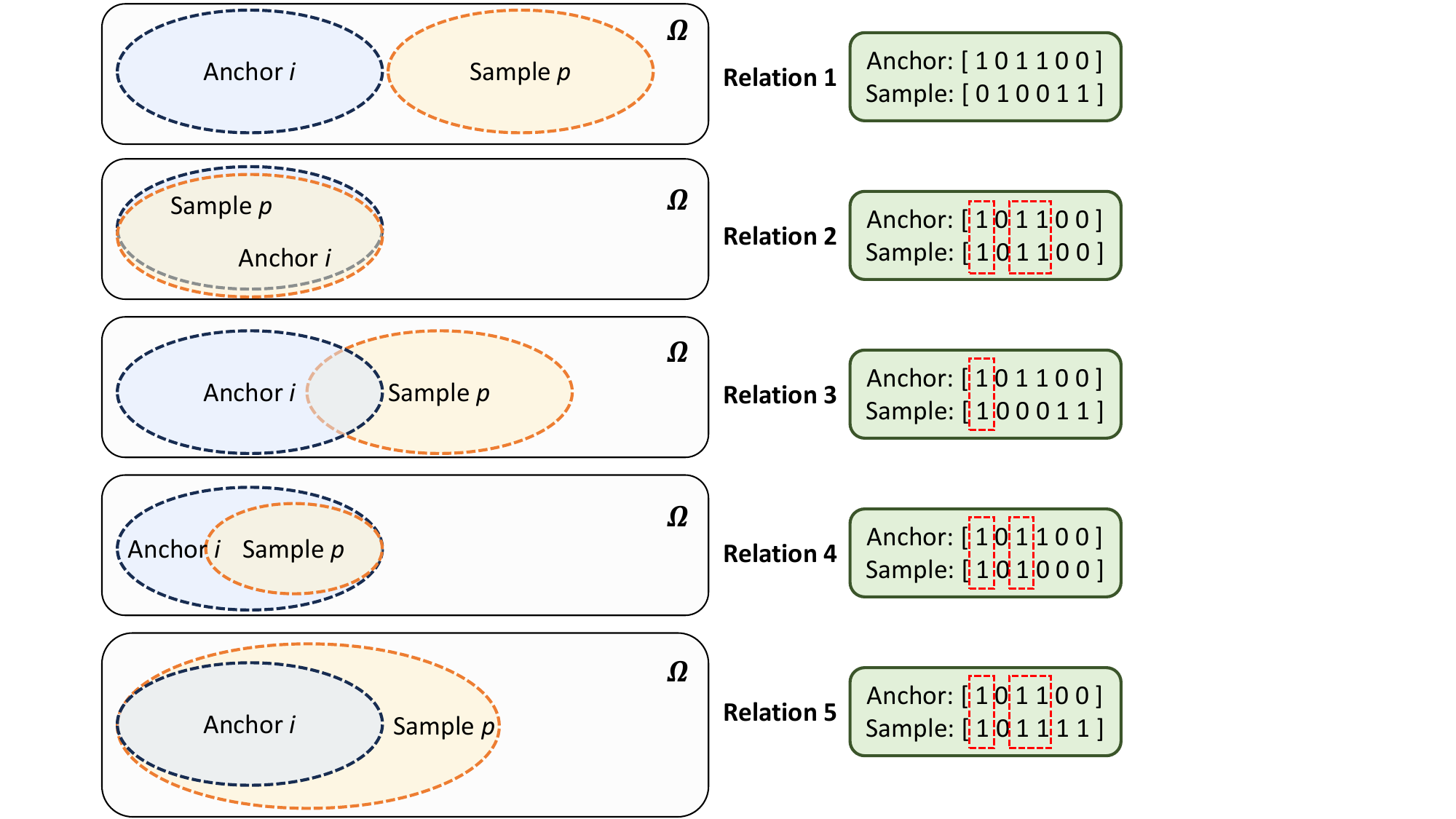} % Reduce the figure size so that it is slightly narrower than the column.
\caption{Five distinct multi-label relations between samples and a given anchor. $\Omega$ denotes a universe that contains all label entities. Here is an example with five different relations between sample $p$ and anchor $i$, where the labels are represented as one-hot vectors.}
\label{figure-relation}
\end{figure*}

% \begin{wrapfigure}{r}{0.5\textwidth}  % r=右侧，l=左侧
%   \centering
%   \includegraphics[width=0.5\textwidth]{figure1.pdf} % 换成你的图片
%   \caption{Five distinct multi-label relations between samples and a given anchor. $\Omega$ denotes a universe that contains all label entities. Here is an example with five different relations between sample $p$ and anchor $i$, where the labels are represented as one-hot vectors.}
%   \label{figure-relation}
% \end{wrapfigure}

To address the aforementioned ambiguities of inter-label relations in MSCL, we define multi-label relations and propose a novel contrastive loss function. 
\textit{Importantly, this study focuses exclusively on inter-label relations in MSCL, rather than on intra-label or hierarchical relations. Accordingly, the term \textbf{relation} in this paper refers specifically to inter-label relations. }
Our main contributions are summarized as follows:  
\begin{itemize}
    \item We introduce the concept of multi-label relations in MSCL and propose the Similarity-Dissimilarity Loss, which dynamically re-weights positive pairs based on these relations.  
    \item We establish theoretical foundations for the proposed framework by analyzing vector similarity under label equivalence, deriving bounds for the similarity-dissimilarity weighting factors, and characterizing how the proposed loss differentiates distinct multi-label relations.
    \item We demonstrate that the proposed loss offers a unified formulation that gracefully reduces to standard SupCon \citep{khosla2020supervised} in single-label scenarios.
    \item We validate our approach across image, text, and medical benchmarks, achieving consistent improvements and state-of-the-art performance on the MIMIC datasets.  
\end{itemize}

\section{Methods}\label{sec:methods}

% In this section, we establish the preliminary notation and adhere to the conventions established in \citep{khosla2020supervised} to maintain consistency throughout our analysis. Subsequently, we examine the limitations of the \textit{ALL}, \textit{ANY}, and \textit{MulSupCon} strategies and their corresponding loss functions. We then introduce our formulation of multi-label relations and present the Similarity-Dissimilarity Loss for MSCL. Furthermore, we provide a rigorous mathematical analysis to establish the theoretical foundations of the proposed methodology.

\subsection{Preliminaries}

Given a batch of $N$ randomly sample/label pairs, $\{(\boldsymbol{x}_{i},\boldsymbol{y}_{i})\}_{i = 1,\dots,N}$, where $\boldsymbol{x}_{i}$ denotes the $i$-th sample and $\boldsymbol{y}_{i}$ its corresponding labels. Here, $\boldsymbol{y}_{i} = \{y_{i}^{(l)}\}_{l = 1, \dots, L}$ represents the label set of sample $i$, where $y_{i}^{(l)}$ denotes the $l$-th label of sample $i$ and $L$ is the total number of labels for sample $i$. After data augmentation, the training batch consists of $2N$ pairs, $\{\tilde{\boldsymbol{x}}_{j},\tilde{\boldsymbol{y}}_{j}\}_{j = 1, \dots, 2N}$, where $\tilde{\boldsymbol{x}}_{2i}$ and $\tilde{\boldsymbol{x}}_{2i-1}$  are two random augmentations of $\boldsymbol{x}_{i}$ ($i = 1, \dots, N$) and $\tilde{\boldsymbol{y}}_{2i-1} = \tilde{\boldsymbol{y}}_{2i} = \boldsymbol{y}_{i}$. For brevity, we refer to this collection of $2N$ augmented samples as a "batch" \citep{khosla2020supervised}.

\subsection{Multi-label Supervised Contrastive Loss}

In MSCL, the formulation of supervised contrastive loss varies depending on the strategies employed for determining positive samples relative to a given anchor. Let $i \in \mathcal{I} =\{1, \dots ,2N\}$ denote the index of an arbitrary augmented sample. For the ALL strategy, the positive set is defined as follows:
\begin{equation}\label{eq:positive-set-all}
    \mathcal{P}(i) = \{ p \in \mathcal{A}(i) | \forall p, \tilde{\boldsymbol{y}}_{p} = \tilde{\boldsymbol{y}}_{i}\} 
    % \nonumber
\end{equation}
where $\mathcal{A}(i) \equiv I \setminus \{i\}$ \footnote{In contrastive learning, sample $i$ is the anchor and should be excluded from the positive set.}.

Subsequently, the positive set for the ANY strategy is defined as follows:
\begin{equation}\label{eq:positive-set-any}
    \mathcal{P}(i) = \{ p \in \mathcal{A}(i) | \forall p, \tilde{\boldsymbol{y}}_{p} \cap \tilde{\boldsymbol{y}}_{i} \neq \varnothing \} 
    % \nonumber
\end{equation}

In MSCL, the form of the contrastive loss function for ALL and ANY is identical.
For each anchor $i$, the loss function is formulated as follows:
\begin{equation}\label{eq:loss-all-any}
    \mathcal{L}_{i}^{} = \frac{-1}{|\mathcal{P}(i)|} \sum_{p \in \mathcal{P}(i)} \log \frac{\exp(\boldsymbol{z}_{i} \cdot \boldsymbol{z}_{p} / \tau)}{\sum_{a \in \mathcal{A}(i)} \exp(\boldsymbol{z}_{i} \cdot \boldsymbol{z}_{a} / \tau)}
\end{equation}

Here, $\tau \in \mathbb{R}^{+}$ represents a positive scalar temperature parameter \citep{chen2020simple}, while $\boldsymbol{z}_{k} = Proj(Enc(\tilde{\boldsymbol{x}}_{k})) \in \mathbb{R}^{D_{P}}$ denotes the projected encoded representation \citep{khosla2020supervised}.

For a given batch of samples, the loss function is formulated as:
\begin{equation}\label{eq:sum-loss-all-any}
    \mathcal{L}^{} = \sum_{i \in I} \mathcal{L}_{i}^{}
\end{equation}

\citet{zhang2024multi} propose an approach that considers each label $\tilde{y}_{i}^{(l)}$ independently, forming multiple positive sets for a given anchor sample $i$. For each label  $\tilde{y}_{i}^{(l)} \in \tilde{\boldsymbol{y}}_{i}$, the positive set for MulSupCon is defined as:
\begin{equation}\label{eq:positive-set-mulsupcon}
    \mathcal{P}(i) = \{ p \in \mathcal{A}(i) | \forall p, \tilde{y}_{p}^{(l)} \in \tilde{\boldsymbol{y}}_{i}\} 
    % \nonumber
\end{equation}

For each anchor $i$, the multi-label supervised contrastive loss for MulSupCon is represented as follows \citep{zhang2024multi}:
\begin{equation}\label{eq:loss-mulsupcon}
    \mathcal{L}_{i}^{\text{mul}} = \sum_{\tilde{y}_{p}^{(l)} \in \tilde{\boldsymbol{y}}_{i}} \frac{-1}{|\mathcal{P}(i)|} \sum_{p \in \mathcal{P}(i)} \log \frac{\exp(\boldsymbol{z}_{i} \cdot \boldsymbol{z}_{p} / \tau)}{\sum_{a \in \mathcal{A}(i)} \exp(\boldsymbol{z}_{i} \cdot \boldsymbol{z}_{a} / \tau)}
\end{equation}

For a given batch of samples, the loss function is formulated as:
\begin{equation}\label{eq:sum-loss-mulsupcon}
    \mathcal{L}^{\text{mul}}=\frac{1} {\sum_{i} | \tilde{\boldsymbol{y}}_{i} |} \sum_{i \in I}  \mathcal{L}_{i}^{\text{mul}} 
\end{equation}

\subsection{Multi-label Relations}
% \added{add a brief descriptions of this subsection: 1)introduce multi-lable relations and theoretical analysis of the limitations for currenct methods}

As illustrated in Figure \ref{figure-relation}, we denote each \textit{Relation} as $R$, where, e.g., $R_1$ stands for \textit{Relation 1}. The subscripted notation $p_{j}$ signifies that sample $p$ corresponds to the $j$-th relation.

Let $\Omega$ denote a universal set containing all possible label entities. For any anchor $i$ and sample $p$, let $\mathcal{S}$ and $\mathcal{T}$ represent their respective label sets. The five fundamental multi-label relations are defined as follows:
\begin{align} 
    R_1&: \mathcal{S} \cap \mathcal{T} = \varnothing \label{eq:r1} \\
    R_2&: \mathcal{S} = \mathcal{T} \label{eq:r2} \\
    R_3&: \mathcal{S} \cap \mathcal{T} \neq \varnothing, \mathcal{S} \nsubseteq \mathcal{T}, \mathcal{T} \nsubseteq \mathcal{S} \label{eq:r3} \\
    R_4&: \mathcal{S} \supsetneq \mathcal{T} \label{eq:r4} \\   %\supsetneq
    R_5&: \mathcal{S} \subsetneq \mathcal{T} \label{eq:r5}
\end{align}

Based on these relational definitions, we present a theoretical analysis of the limitations inherent in the ALL, ANY, and MulSupCon methods, illustrated via an example in Figure \ref{figure-relation}.

In ALL, the optimization process aims to align with the mean representation of samples sharing identical label sets \citep{zhang2024multi}. As the example that is demonstrated in Figure \ref{figure-relation}, for a given anchor $i$, the positive set of ALL is:
$$
\mathcal{P}(i) = \{ p_2 \}
$$

In ALL, the sample $p_{j}$ in $R_2$ is designated as a positive sample, while those in relations $R_3$, $R_4$ and $R_5$ are excluded from consideration. Specifically, despite their semantic similarity to anchor $i$ due to overlapping labels, the feature representations of samples $p_{j}$ where $j \in {3,4,5}$  are forced away from the anchor in the embedding space, as they are treated as negative examples in the contrastive learning paradigm. Consequently, the restricted size of the positive set $|\mathcal{P}(i)|$ results in a mean representation susceptible to statistical variance. Furthermore, the ALL method may inadvertently treat semantically related samples as negative instances in certain scenarios.

\begin{lemma} \label{lemma:1}

(Vector Similarity Under Label Equivalence). Let $i$ be an anchor and $p$ be any sample in the feature space, where $\tilde{\boldsymbol{y}}_{i}, \tilde{\boldsymbol{y}}_{p} \in \mathbb{R}^d$ denote their respective label vectors. If $\tilde{\boldsymbol{y}}_{p} = \tilde{\boldsymbol{y}}_{i}$, then under the contrastive learning framework \citep{chen2020simple}, their corresponding projected representations $\boldsymbol{z}_{i}, \boldsymbol{z}_{p} \in \mathbb{R}^m$ satisfy $\boldsymbol{z}_{i} \simeq \boldsymbol{z}_{p}$.
\end{lemma}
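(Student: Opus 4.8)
The plan is to reduce the statement to the \emph{alignment} property of the supervised contrastive objective: once two samples carry identical label vectors, the \textit{ALL} loss explicitly rewards the similarity of their projected representations, so any (approximate) minimizer must place them close together. Concretely, I would first invoke the positive-set definition in Equation~\ref{eq:positive-set-all}: since $\tilde{\boldsymbol{y}}_{p} = \tilde{\boldsymbol{y}}_{i}$, the sample $p$ belongs to $\mathcal{P}(i)$, so the per-anchor loss $\mathcal{L}_{i}$ in Equation~\ref{eq:loss-all-any} contains the explicit term $-\frac{1}{|\mathcal{P}(i)|}\log\frac{\exp(\boldsymbol{z}_{i}\cdot\boldsymbol{z}_{p}/\tau)}{\sum_{a\in\mathcal{A}(i)}\exp(\boldsymbol{z}_{i}\cdot\boldsymbol{z}_{a}/\tau)}$. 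This term is strictly decreasing in the numerator inner product $\boldsymbol{z}_{i}\cdot\boldsymbol{z}_{p}$, which is precisely the quantity I want to drive toward its maximum.

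The next step uses the standard SimCLR normalization \citep{chen2020simple}: the projected representations lie on the unit hypersphere, $\|\boldsymbol{z}_{i}\| = \|\boldsymbol{z}_{p}\| = 1$. By the Cauchy--Schwarz inequality, $\boldsymbol{z}_{i}\cdot\boldsymbol{z}_{p} \le \|\boldsymbol{z}_{i}\|\,\|\boldsymbol{z}_{p}\| = 1$, with equality if and only if $\boldsymbol{z}_{i} = \boldsymbol{z}_{p}$. Hence minimizing $\mathcal{L}_{i}$ pushes $\boldsymbol{z}_{i}\cdot\boldsymbol{z}_{p}\to 1$, equivalently the angular distance between the pair to $0$, which yields $\boldsymbol{z}_{i} \simeq \boldsymbol{z}_{p}$. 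Equivalently, I would compute $\partial \mathcal{L}_{i}/\partial \boldsymbol{z}_{p}$ and show that the negative gradient $-\partial \mathcal{L}_{i}/\partial \boldsymbol{z}_{p}$, projected onto the tangent space of the sphere, points toward $\boldsymbol{z}_{i}$, so gradient descent monotonically increases the alignment of the pair.

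The main obstacle is the denominator of Equation~\ref{eq:loss-all-any}: the same loss also pushes $\boldsymbol{z}_{i}$ away from every $\boldsymbol{z}_{a}$ (the uniformity pressure), and because $\mathcal{P}(i)\subseteq\mathcal{A}(i)$ the pair $(\boldsymbol{z}_{i},\boldsymbol{z}_{p})$ appears in \emph{both} numerator and denominator, so exact equality $\boldsymbol{z}_{i}=\boldsymbol{z}_{p}$ need not be the global minimizer. I would handle this by exploiting that the claim is only the approximate relation $\simeq$: in the temperature regime $\tau$ used in practice, the attractive (alignment) contribution of a positive pair dominates its single repulsive appearance inside the log-sum-exp, so the minimizer keeps same-label pairs at near-maximal similarity. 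Making this precise would require either (i) an alignment--uniformity decomposition of the objective, isolating the term $\E\,\|\boldsymbol{z}_{i}-\boldsymbol{z}_{p}\|^{2}$ over positives and showing it is the dominant penalty, or (ii) a first-order stationarity argument bounding the residual angle as $O(\tau)$. Either route converts the qualitative ``minimizing the contrastive loss'' into the quantitative conclusion $\boldsymbol{z}_{i}\simeq\boldsymbol{z}_{p}$, completing the proof.
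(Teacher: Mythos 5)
You should know up front that the paper does not actually prove Lemma~\ref{lemma:1}: it is asserted in the main text with no accompanying argument (the appendix proves only Theorems~\ref{theorem:1}--\ref{theorem:5}), and a footnote concedes that only the approximate relation $\simeq$ is meant, ``due to vector similarity in Lemma 1 and the inherent uncertainty in deep learning's non-linear transformations.'' In other words, the authors treat the lemma as a working assumption about what the contrastive objective does at (near-)optimality, not as a theorem they establish. So there is no paper proof to match your approach against; your sketch is an attempt to supply something the paper leaves implicit, and the route you choose --- alignment pressure of the InfoNCE-type term, unit-sphere normalization from \citet{chen2020simple}, Cauchy--Schwarz to identify $\boldsymbol{z}_{i}\cdot\boldsymbol{z}_{p}=1$ with $\boldsymbol{z}_{i}=\boldsymbol{z}_{p}$ --- is the standard and sensible one.

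That said, as a proof your proposal is not yet complete, and you have correctly located where the gap is but not closed it. First, the lemma as stated does not hypothesize that the representations minimize (or approximately minimize) the loss; your entire argument rests on that, so it must be added as an explicit hypothesis, otherwise the claim is false (at initialization nothing forces $\boldsymbol{z}_{i}\simeq\boldsymbol{z}_{p}$). Second, the monotonicity step is subtler than you state: with several positives, increasing $\boldsymbol{z}_{i}\cdot\boldsymbol{z}_{p}$ also increases the denominators of the \emph{other} positives' log-softmax terms (not just the denominator of $p$'s own term), so $\mathcal{L}_{i}$ is not obviously decreasing in $\boldsymbol{z}_{i}\cdot\boldsymbol{z}_{p}$; the correct statement is about the sign of the aggregate gradient, which for the SupCon loss is proportional to $\tfrac{1}{\tau}\bigl(P_{ip}-\tfrac{1}{|\mathcal{P}(i)|}\bigr)$ with $P_{ip}$ the softmax weight of $p$, and one must argue this is negative (attractive) for same-label pairs near optimality. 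Third, and most importantly, the decisive quantitative step --- the alignment--uniformity decomposition or the $O(\tau)$ stationarity bound on the residual angle --- is only named, not carried out; until one of those is executed, the argument remains a heuristic of exactly the kind the paper itself is relying on. Given that the paper never defines $\simeq$ formally, the honest options are either to state the lemma as an assumption (as the paper effectively does) or to fix a precise tolerance and prove the bound; your sketch points toward the second option but does not yet deliver it.
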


%% Proof of Lemma 1
\begin{proof}
    See Appendix \ref{proof:lem1}.
\end{proof}

% \paragraph{Interpretation.}
\textbf{Interpretation.}
Lemma 1 formalizes the intuition that exact label-set equivalence provides a reliable positive relation: when two samples share identical labels, their projected representations are expected to be close under the contrastive learning objective. This provides a theoretical justification for treating $R_2$ as a high-confidence positive relation.

As per ANY's definition, the positive set of the example in Figure \ref{figure-relation} is:
$$
\mathcal{P}(i) = \{ p_2, p_3, p_4, p_5\}
$$

By applying Lemma \ref{lemma:1}, the corresponding loss terms in Equation \ref{eq:loss-all-any} for samples in different relations exhibit approximate equality:
$$
\mathcal{L}(R_2) \approx \mathcal{L}(R_3) \approx \mathcal{L}(R_4) \approx \mathcal{L}(R_5) \footnote{The approximation notation is used instead of equality due to vector similarity in Lemma 1 and the inherent uncertainty in deep learning's non-linear transformations.}
$$

It is evident that $R_2$, $R_3$, $R_4$ and $R_5$ represent fundamentally distinct relations, each characterized by different labels and semantic information. However, ANY fails to differentiate these subtle label hierarchies, introducing substantial semantic ambiguity. Moreover, in scenarios where samples predominantly share common classes, the averaging mechanism disproportionately emphasizes these shared classes while diminishing the significance of distinctive features \citep{zhang2024multi}.

MulSupCon employs a positive sample identification mechanism analogous to ANY; samples $p_{j}$, where $j \in {3,4,5}$ are designated as positive instances. However, MulSupCon distinguishes itself by evaluating each label individually and forming multiple positive sets for a single anchor sample. This approach aggregates positive samples based on the number of overlapping labels between the positive samples and the anchor, thereby expanding the space of positive sets:
$$
\mathcal{P}(i) = \{ p_2, p_2, p_2, p_3, p_4, p_4, p_5, p_5, p_5\}
$$

Subsequently, the loss terms for $p_j$ in Equation \ref{eq:loss-mulsupcon} are as follows by Lemma \ref{lemma:1}:
$$
\mathcal{L}(R_2) \approx \mathcal{L}(R_5) \neq \mathcal{L}(R_3) \neq \mathcal{L}(R_4)
$$

For this example (see Figure \ref{figure-relation}), MulSupCon successfully discriminates $R_3$ and $R_4$ from $R_2$ and $R_5$; however, it fails to establish a distinction between $R_2$ and $R_5$. This limitation arises primarily because MulSupCon exclusively considers the overlapping regions (\textit{Similarity} \footnote{The definition of \textit{Similarity} is introduced in Section \ref{sec:sim-dissim-loss}}) between anchor $i$ and sample $p$ (i.e., the intersection of sets $\mathcal{S}$ and $\mathcal{T}$), while disregarding the complementary non-intersecting domains (\textit{Dissimilarity} \footnote{The definition of \textit{Dissimilarity} is introduced in Section \ref{sec:sim-dissim-loss}}). That is to say, the similarity between positive samples and anchors is considered, yet dissimilarity is not, although it provides critical information for representation learning in MSCL.

% The primary reason is that MulSupCon de facto considers the overlapping regions (i.e., intersecting portions) between sample $p$ and anchor $i$, while neglecting the non-intersecting regions (see Figure \ref{figure-relation}).

Leveraging the proposed multi-label relations, our theoretical analysis systematically elucidates the limitations of existing methods and establishes a rigorous foundation for investigating the profound exploration of concepts of similarity and dissimilarity, and the design of a contrastive loss function.

\subsection{Similarity-Dissimilarity Loss} \label{sec:sim-dissim-loss}

To address the aforementioned challenges, we introduce the concepts of similarity and dissimilarity based on set-theoretic relations: (i) As depicted in Figure \ref{figure-relation}, \textit{Similarity} represents the intersection of sets (i.e., $\mathcal{S} \cap \mathcal{T}$), and (ii) we define \textit{Dissimilarity} as the set difference between $\mathcal{T}$ and the intersection $\mathcal{S} \cap \mathcal{T}$ with respect to sample $p$ (i.e., $\mathcal{T} \setminus \mathcal{S} \cap \mathcal{T}$). For each anchor $i$, we formulate the Similarity-Dissimilarity Loss as:
\begin{equation}\label{eq:loss-similarity-dissimilarity}
    \mathcal{L}_{i}^{\text{our}} = \frac{-1}{|\mathcal{P}(i)|} \sum_{p \in \mathcal{P}(i)} \log \frac{\mathcal{K}_{i,p}^{s} \mathcal{K}_{i,p}^{d} \exp(\boldsymbol{z}_{i} \cdot \boldsymbol{z}_{p} / \tau)}{\sum_{a \in \mathcal{A}(i)} \exp(\boldsymbol{z}_{i} \cdot \boldsymbol{z}_{a} / \tau)}
\end{equation}

Here, we define $\mathcal{K}_{i,p}^{s}$ and $\mathcal{K}_{i,p}^{d}$ that quantify the \textit{Similarity} and \textit{Dissimilarity} factors for a given anchor $i$ and a positive sample $p$, respectively. These factors are formally defined as follows:
\begin{equation}\label{eq:similarity}
    \mathcal{K}_{i,p}^{s} = \frac{|\tilde{\boldsymbol{y}}_{p}^{s}|}{|\tilde{\boldsymbol{y}}_{i}|} = \frac{|\mathcal{S} \cap \mathcal{T}|}{|\mathcal{S}|}
\end{equation}
and
\begin{equation}\label{eq:dissimilarity}
    \mathcal{K}_{i,p}^{d} = \frac{1}{1 + |\tilde{\boldsymbol{y}}_{p}^{d}|} = \frac{1}{1+|\mathcal{T} \setminus (\mathcal{S} \cap \mathcal{T})|}
\end{equation}
where we define  the following set-theoretic quantities:
\begin{itemize}
    \item $|\tilde{\boldsymbol{y}}_{i}| = |\mathcal{S}|$  denotes the cardinality of the label space $\tilde{\boldsymbol{y}}_{i}$.
    \item $|\tilde{\boldsymbol{y}}_{p}^{s}| = |\mathcal{S} \cap \mathcal{T}|$ measures the cardinality of the intersection of sets $\mathcal{S}$ and $\mathcal{T}$.
    \item $|\tilde{\boldsymbol{y}}_{p}^{d}| = |\mathcal{T} \setminus (\mathcal{S} \cap \mathcal{T})|$ represents the cardinality of the relative complement with respect to sample $p$.
\end{itemize}

The product of $\mathcal{K}_{i,p}^{s}$ and $\mathcal{K}_{i,p}^{d}$ is termed as \textit{similarity-dissimilarity factor}. Moreover, the following relation holds:
\begin{equation}\label{eq:yd-ys}
    |\tilde{\boldsymbol{y}}_{p}^{d}| = |\tilde{\boldsymbol{y}}_{p}| - |\tilde{\boldsymbol{y}}_{p}^{s}| \geq 0
\end{equation}
where $|\tilde{\boldsymbol{y}}_{p}|$ represents the cardinality of the label space associated with sample $p$.

\textit{Importantly}, a detailed mathematical explanation of the re-weighting mechanism for the Similarity-Dissimilarity Loss is provided in Section \ref{sec:re-weighting}. Additionally, Appendix \ref{sec:explanation} offers an analysis of the temperature scaling hyperparameter $\tau$ (see Appendix \ref{sec:temperature}), and the rationale for selecting the dissimilarity penalty function (see Appendix \ref{sec:penalty}).

\subsubsection{A Unified Formulation of Supervised Contrastive Loss}
Specifically, the Similarity-Dissimilarity Loss reduces to Equation \ref{eq:loss-all-any} when the following conditions are simultaneously satisfied:
\begin{align}\label{eq:condition}
\left\{
\begin{aligned}
|\tilde{\boldsymbol{y}}_{i}| &= |\tilde{\boldsymbol{y}}_{p}^{s}|\\
|\tilde{\boldsymbol{y}}_{p}^{d}| &= 0
\end{aligned}
\right.
\end{align}

Accordingly, our proposed loss function constitutes a generalized form of the basic supervised contrastive loss (see Equation \ref{eq:loss-all-any}). In particular, Equation \ref{eq:loss-all-any} represents a special case of the Similarity-Dissimilarity Loss. Moreover, our contrastive loss unifies both single-label and multi-label supervised contrastive loss functions within a comprehensive formulation and paradigm.

\subsection{Case Analysis}

Let us examine the behavior of our loss function through a detailed analysis of five distinct relational cases illustrated in Figure \ref{figure-relation}. Consider the following sequences of cardinalities:
\begin{align}
\left\{
\begin{aligned}
|\tilde{\boldsymbol{y}}_{p_{j}}^{s}| &= \{0,3,1,2,3\}_{j=1,2,3,4,5}\\
|\tilde{\boldsymbol{y}}_{p_{j}}^{d}| &= \{3,0,2,0,2\}_{j=1,2,3,4,5}
\end{aligned}
\right.
\nonumber
\end{align}

Applying these values to Equation \ref{eq:similarity} and \ref{eq:dissimilarity}, we obtain:
\begin{align}
\left\{
\begin{aligned}
\mathcal{K}_{i,p}^{s} &= \{0,1,\frac{1}{3},\frac{2}{3},1\}\\
\mathcal{K}_{i,p}^{d} &= \{\frac{1}{4},1,\frac{1}{3},1,\frac{1}{3}\}
\end{aligned}
\right.
\nonumber
\end{align}

Consequently, the product of these measures yields:
\begin{equation}
    \mathcal{K}_{i,p}^{s} \mathcal{K}_{i,p}^{d} = \{0,1,\frac{1}{9},\frac{2}{3},\frac{1}{3} \}
    \nonumber
\end{equation}

By Equation \ref{eq:loss-similarity-dissimilarity} and Lemma \ref{lemma:1}, these distinct relations ($R_2$ through $R_5$) generate unique loss values, establishing the following inequalities:
\begin{equation}
    \mathcal{L}(R_2) \neq \mathcal{L}(R_3) \neq \mathcal{L}(R_4) \neq \mathcal{L}(R_5)
    \nonumber
\end{equation}

The proposed loss function effectively discriminates among the five distinct relations through a principled re-weighting mechanism, as formulated in Equation \ref{eq:loss-similarity-dissimilarity}, \ref{eq:similarity}, and \ref{eq:dissimilarity}, compared to existing methods in MSCL.

Furthermore, in contrast to MulSupCon, the Similarity-Dissimilarity Loss preserves the cardinality of positive sets while maintaining computational efficiency, as it requires no additional computational overhead. A detailed computational cost analysis for the proposed loss is provided in Appendix \ref{sec:cost}.

\subsection{Theoretical Analysis}

The proposed loss function incorporates a weighting mechanism defined by the product of factors $\mathcal{K}_{i,p}^{s}$ and $\mathcal{K}_{i,p}^{d}$. By construction, the \textit{similarity-dissimilarity factor}, $\mathcal{K}_{i,p}^{s} \mathcal{K}_{i,p}^{d}$, is bounded within the closed interval $[0,1]$ across all possible relational configurations. Consequently, it is formulated as:
\begin{equation}
    \mathcal{K}_{i,p}^{s} \mathcal{K}_{i,p}^{d} \in [0,1]
\end{equation}

For notational conciseness, we denote the product of the similarity and dissimilarity factors across the five relations as $\{\mathcal{K}_{m}^{s} \mathcal{K}_{m}^{d}\}_{m = 1,2,3,4,5}$.

%% Theorem 1
\begin{restatable}{theorem}{TheoremOne} \label{theorem:1}
Let $\mathcal{K}_{m}^{s}$ and $\mathcal{K}_{m}^{d}$ be the Similarity and Dissimilarity operators, respectively, as defined in Eq. (\ref{eq:similarity}) and (\ref{eq:dissimilarity}). For the case $m = 1$, their product vanishes:
\begin{equation}
    \mathcal{K}_{m}^{s} \mathcal{K}_{m}^{d} = 0, \quad \text{when } m = 1
\end{equation}
\end{restatable}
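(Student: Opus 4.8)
The plan is to proceed directly from the set-theoretic definition of relation $R1$ and substitute it into the defining formulas for the two factors. By Equation~(\ref{eq:r1}), the case $m=1$ corresponds precisely to $\mathcal{S} \cap \mathcal{T} = \varnothing$, i.e.\ the anchor $i$ and the sample $p$ share no labels. The entire argument then reduces to tracking how this emptiness propagates through the similarity factor, since a vanishing similarity factor multiplied by any finite dissimilarity factor gives zero.

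First I would evaluate the similarity factor $\mathcal{K}_{1}^{s}$. Using Equation~(\ref{eq:similarity}), $\mathcal{K}_{1}^{s} = |\mathcal{S} \cap \mathcal{T}| / |\mathcal{S}|$. Since $\mathcal{S} \cap \mathcal{T} = \varnothing$ forces $|\mathcal{S} \cap \mathcal{T}| = 0$, and since every anchor in MSCL carries at least one label so that $|\mathcal{S}| = |\tilde{\boldsymbol{y}}_{i}| \geq 1 > 0$, the quotient is well-defined and equals $0$. Hence $\mathcal{K}_{1}^{s} = 0$.

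Next I would confirm that the dissimilarity factor $\mathcal{K}_{1}^{d}$ remains finite, so that the product is a genuine product of zero with a bounded quantity rather than an indeterminate form. From Equation~(\ref{eq:dissimilarity}), under $R1$ we have $\mathcal{K}_{1}^{d} = 1 / (1 + |\mathcal{T} \setminus (\mathcal{S} \cap \mathcal{T})|) = 1 / (1 + |\mathcal{T}|)$; because $|\tilde{\boldsymbol{y}}_{p}^{d}| \geq 0$ by Equation~(\ref{eq:yd-ys}), the denominator is at least $1$, so $\mathcal{K}_{1}^{d} \in (0,1]$. Multiplying the two factors then yields $\mathcal{K}_{1}^{s} \mathcal{K}_{1}^{d} = 0 \cdot \mathcal{K}_{1}^{d} = 0$, which is the asserted identity.

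The statement is essentially definitional, so there is no substantive obstacle; the only point requiring care is the well-definedness of the similarity quotient, which is why I would explicitly record the nondegeneracy assumption $|\mathcal{S}| \geq 1$ that every sample carries at least one label. Observing additionally that $\mathcal{K}_{1}^{d}$ is bounded away from both $0$ and $\infty$ rules out any $0/0$ or $0 \cdot \infty$ pathology, which completes the argument.
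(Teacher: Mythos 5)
Your proposal is correct and follows essentially the same route as the paper's proof: both reduce the claim to $|\mathcal{S}\cap\mathcal{T}|=0$ forcing $\mathcal{K}_{1}^{s}=0$, and then observe that $\mathcal{K}_{1}^{d}$ is finite so the product vanishes. The only difference is that you explicitly record the nondegeneracy condition $|\mathcal{S}|\geq 1$, which the paper leaves implicit; this is a harmless and slightly more careful bookkeeping of the same argument.
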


%% Proof of Theorem 1
\begin{proof}
    See Appendix \ref{proof:thm1}.
\end{proof}

%% Theorem 2
\begin{restatable}{theorem}{TheoremTwo} \label{theorem:2}
Consider the Similarity operator $\mathcal{K}_{m}^{s}$ and Dissimilarity operator $\mathcal{K}_{m}^{d}$ as defined in Eq. (\ref{eq:similarity}) and (\ref{eq:dissimilarity}). For the case $m = 2$, their product equals unity:
\begin{equation}
    \mathcal{K}_{m}^{s} \mathcal{K}_{m}^{d} = 1, \quad \text{when } m = 2
\end{equation}
\end{restatable}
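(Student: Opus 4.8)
The plan is to prove the claim by direct substitution: since case $m = 2$ corresponds to relation $R2$, where by Equation (\ref{eq:r2}) the anchor and sample label sets coincide exactly ($\mathcal{S} = \mathcal{T}$), I would evaluate the similarity and dissimilarity factors of Equations (\ref{eq:similarity}) and (\ref{eq:dissimilarity}) under this equality and show that each factor individually collapses to unity, so that their product is $1$.

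First I would handle the similarity factor. Because $\mathcal{S} = \mathcal{T}$, the intersection satisfies $\mathcal{S} \cap \mathcal{T} = \mathcal{S}$, hence $|\mathcal{S} \cap \mathcal{T}| = |\mathcal{S}|$. Substituting into Equation (\ref{eq:similarity}) gives $\mathcal{K}_{m}^{s} = |\mathcal{S}|/|\mathcal{S}| = 1$. Next I would handle the dissimilarity factor, using $\mathcal{S} \cap \mathcal{T} = \mathcal{T}$ so that the relative complement with respect to the sample vanishes, $\mathcal{T} \setminus (\mathcal{S} \cap \mathcal{T}) = \mathcal{T} \setminus \mathcal{T} = \varnothing$, giving $|\mathcal{T} \setminus (\mathcal{S} \cap \mathcal{T})| = 0$. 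Plugging this into Equation (\ref{eq:dissimilarity}) yields $\mathcal{K}_{m}^{d} = 1/(1+0) = 1$. Multiplying the two factors then produces $\mathcal{K}_{m}^{s}\mathcal{K}_{m}^{d} = 1 \cdot 1 = 1$, which is exactly the assertion.

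There is no substantive obstacle here, as the argument is a one-line substitution for each factor; the only point requiring care is well-definedness of the similarity factor, which divides by $|\mathcal{S}|$. This is harmless because the MSCL setting assumes each sample carries a nonempty multi-label annotation, so $|\mathcal{S}| \geq 1$ and the quotient is defined. I would note this edge case briefly for completeness, but it does not affect the conclusion.
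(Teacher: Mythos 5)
Your proof is correct and follows essentially the same route as the paper's: both evaluate $\mathcal{K}_{2}^{s}$ and $\mathcal{K}_{2}^{d}$ by direct substitution under $\mathcal{S} = \mathcal{T}$, showing each factor equals $1$ individually. Your brief remark on the well-definedness of dividing by $|\mathcal{S}|$ is a small addition the paper leaves implicit, but it does not change the argument.
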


%% Proof of Theorem 2
\begin{proof}
    See Appendix \ref{proof:thm2}.
\end{proof}

%% Theorem 3
\begin{restatable}{theorem}{TheoremThree} \label{theorem:3}
Let $\mathcal{K}_{m}^{s}$ and $\mathcal{K}_{m}^{d}$ be the Similarity and Dissimilarity operators as defined in Eq. (\ref{eq:similarity}) and (\ref{eq:dissimilarity}), respectively. For $m \in \{3,4,5\}$, their product is strictly bounded between 0 and 1:
\begin{equation}
    0 < \mathcal{K}_{m}^{s} \mathcal{K}_{m}^{d} < 1
\end{equation}
\end{restatable}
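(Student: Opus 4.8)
The plan is to reduce the product to a single rational expression in three integer cardinalities, and then establish the two bounds separately, treating the upper bound relation by relation. Writing $a = |\mathcal{S} \cap \mathcal{T}|$, $b = |\mathcal{T} \setminus (\mathcal{S} \cap \mathcal{T})|$, and $s = |\mathcal{S}|$, the definitions in Eq. (\ref{eq:similarity}) and (\ref{eq:dissimilarity}) give
\[
\mathcal{K}_{m}^{s}\,\mathcal{K}_{m}^{d} = \frac{a}{s\,(1+b)}.
\]
Because every label set is nonempty we have $s \geq 1$ and $b \geq 0$, so the denominator is a finite positive integer; the whole statement then reduces to controlling the numerator $a$ against $s(1+b)$.

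For the lower bound I would argue uniformly across the three relations: the intersection $\mathcal{S} \cap \mathcal{T}$ is nonempty in each case. This is immediate for $R3$ by its defining condition, while for $R4$ (resp. $R5$) the intersection equals $\mathcal{T}$ (resp. $\mathcal{S}$), which is nonempty. Hence $a \geq 1$ whereas $s(1+b)$ is finite and positive, so the quotient is strictly positive.

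For the upper bound, the key observation is that the strict gap below $1$ can originate from two different factors, so I would split into the three relations. In $R3$ the condition $\mathcal{T} \nsubseteq \mathcal{S}$ forces $b \geq 1$, hence $s(1+b) \geq 2s > s \geq a$. In $R4$ we have $\mathcal{S} \cap \mathcal{T} = \mathcal{T}$ and $\mathcal{T} \setminus (\mathcal{S}\cap\mathcal{T}) = \varnothing$, so $b = 0$ and the product collapses to $a/s = |\mathcal{T}|/|\mathcal{S}| < 1$ since $\mathcal{T} \subsetneq \mathcal{S}$. In $R5$ we have $\mathcal{S} \cap \mathcal{T} = \mathcal{S}$, so $a = s$ and the product collapses to $1/(1+b)$ with $b = |\mathcal{T} \setminus \mathcal{S}| \geq 1$, again strictly below $1$.

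The only genuine obstacle is bookkeeping the source of strictness: the slack against $1$ comes from the dissimilarity factor in $R5$ (where $a = s$ but $b \geq 1$), from the similarity factor in $R4$ (where $b = 0$ but $a < s$), and from both in $R3$. For this reason I would state explicitly the standing assumption that every label set is nonempty, since this is precisely what guarantees $s \geq 1$ and the nonemptiness of the relevant intersection, and hence what prevents the lower bound from degenerating to $0$.
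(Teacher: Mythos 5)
Your proof is correct and follows essentially the same route as the paper's: strict positivity from the nonemptiness of $\mathcal{S}\cap\mathcal{T}$ in all three relations, followed by a case split over $m=3,4,5$ that locates the source of the strict gap below $1$ in the similarity factor ($R4$), the dissimilarity factor ($R5$), or both ($R3$). Your consolidation into the single expression $a/(s(1+b))$ and your explicit flagging of the standing assumption that label sets are nonempty are cosmetic refinements rather than a different argument.
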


%% Proof of Theorem 3
\begin{proof}
    See Appendix \ref{proof:thm3}.
\end{proof}

Based on Theorem \ref{theorem:1}, \ref{theorem:2}, and \ref{theorem:3}, the product of the weighting factors $\mathcal{K}_{i,p}^{s}$ and $\mathcal{K}_{i,p}^{d}$ is bounded within the interval $[0,1]$, aligning with fundamental principles of loss functions and set-theoretic relations. The non-negative lower bound adheres to the essential property of loss functions being strictly positive \citep{lecun2015deep}. Given that our proposed loss function generalizes the supervised contrastive loss \citep{khosla2020supervised} and incorporates multi-label relation definitions, the upper bound is naturally capped at $1$. 
This mathematical framework demonstrates that the proposed loss dynamically scales the weighting factors within $[0,1]$ to effectively differentiate sample features, providing a rigorous mathematical justification for both its formulation and efficacy.

%% Theorem 4
\begin{restatable}{theorem}{TheoremFour} \label{theorem:4}
Let $i \in \mathcal{I}$ be a fixed anchor sample, and let $p_3, p_4 \in \mathcal{P}(i)$ be positive samples corresponding to relations $R_3$ and $R_4$, respectively. Suppose their label spaces satisfy the cardinality constraint:
\begin{equation} \label{eq:card-constraint-4}
    |\tilde{\boldsymbol{y}}_{p_3}| = |\tilde{\boldsymbol{y}}_{p_4}|
\end{equation}
Then, the product of similarity and dissimilarity operators satisfies the strict inequality:
\begin{equation} \label{eq:strict_dom}
    \mathcal{K}_{4}^{s} \mathcal{K}_{4}^{d} > \mathcal{K}_{3}^{s} \mathcal{K}_{3}^{d}
\end{equation}
\end{restatable}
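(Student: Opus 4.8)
The plan is to reduce both products to simple functions of three cardinalities and then compare them directly. Write $s = |\mathcal{S}| = |\tilde{\boldsymbol{y}}_{i}|$ for the anchor's (fixed) label-set size, and let $\mathcal{T}_3, \mathcal{T}_4$ be the label sets of $p_3, p_4$; the hypothesis (\ref{eq:card-constraint-4}) says $|\mathcal{T}_3| = |\mathcal{T}_4| =: t$. For each $k \in \{3,4\}$ denote the intersection size $a_k = |\mathcal{S} \cap \mathcal{T}_k|$ and the dissimilarity size $d_k = |\mathcal{T}_k \setminus (\mathcal{S} \cap \mathcal{T}_k)|$, so that $a_k + d_k = t$ because the intersection and its relative complement partition $\mathcal{T}_k$. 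With this notation the definitions (\ref{eq:similarity})--(\ref{eq:dissimilarity}) give $\mathcal{K}_k^{s}\mathcal{K}_k^{d} = \frac{a_k}{s(1+d_k)}$.

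Next I would extract the structural consequences of the two relations. From $R4$, i.e. $\mathcal{S} \supsetneq \mathcal{T}_4$ (Eq.~\ref{eq:r4}), we get $\mathcal{T}_4 \subseteq \mathcal{S}$, hence $\mathcal{S} \cap \mathcal{T}_4 = \mathcal{T}_4$, so $a_4 = t$ and $d_4 = 0$; therefore $\mathcal{K}_4^{s}\mathcal{K}_4^{d} = t/s$. From $R3$ (Eq.~\ref{eq:r3}): the condition $\mathcal{T}_3 \nsubseteq \mathcal{S}$ forces at least one element of $\mathcal{T}_3$ to lie outside $\mathcal{S}$, i.e. $d_3 \geq 1$, while $\mathcal{S} \cap \mathcal{T}_3 \neq \varnothing$ gives $a_3 \geq 1$. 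In particular $a_3 = t - d_3 < t$, so relative to $p_4$ the sample $p_3$ has a strictly smaller similarity numerator and a strictly larger dissimilarity denominator.

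Finally I would clear denominators: the claim $\mathcal{K}_4^{s}\mathcal{K}_4^{d} > \mathcal{K}_3^{s}\mathcal{K}_3^{d}$ is equivalent to $\frac{t}{s} > \frac{a_3}{s(1+d_3)}$, i.e. to $t(1+d_3) > a_3$. Since $1 + d_3 \geq 1$ and $t = a_3 + d_3 > a_3$ (using $d_3 \geq 1$), we obtain $t(1+d_3) \geq t > a_3$, which establishes (\ref{eq:strict_dom}).

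There is no deep obstacle here; the statement is a short computation. The one place that requires care is the correct reading of the relational definitions (\ref{eq:r3})--(\ref{eq:r4}) to pin down the strict inequalities $d_3 \geq 1$ and $a_3 < t$, since these are exactly what guarantee that $p_4$ dominates $p_3$ simultaneously in the numerator (larger intersection) and in the denominator (vanishing dissimilarity). I would also emphasize where the equal-cardinality hypothesis enters: it is used precisely in writing $a_3 = t - d_3$, and without it the comparison could be reversed by enlarging the intersection of $\mathcal{T}_3$ with $\mathcal{S}$, so (\ref{eq:card-constraint-4}) is what makes the inequality a theorem rather than a heuristic.
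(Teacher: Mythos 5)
Your proposal is correct and follows essentially the same route as the paper's proof: reduce both products to explicit ratios of cardinalities using the structural consequences of $R3$ and $R4$ (namely $d_4=0$, $a_4=t$, $d_3\geq 1$), clear the common denominator $|\tilde{\boldsymbol{y}}_{i}|$, and invoke the equal-cardinality hypothesis to compare $t(1+d_3)$ with $a_3$. If anything, your version is tidier than the paper's, which states the final step as the generic inequality $a(1+b)>a-b$ for positive reals without making explicit that it needs $b=d_3>0$ (guaranteed by $\mathcal{T}_3\nsubseteq\mathcal{S}$), a point you handle cleanly.
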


%% Proof of Theorem 4
\begin{proof}
    See Appendix \ref{proof:thm4}.
\end{proof}

%% Theorem 5
\begin{restatable}{theorem}{TheoremFive} \label{theorem:5}
Let $i \in \mathcal{I}$ be a fixed anchor sample, and let $p_3, p_5 \in \mathcal{P}(i)$ be positive samples corresponding to relations $R_3$ and $R_5$, respectively. Suppose:
\begin{equation} \label{eq:card-constraint-5}
    |\tilde{\boldsymbol{y}}_{p_5}^{d}| \leq |\tilde{\boldsymbol{y}}_{p_3}^{d}|
\end{equation}
Then, the product of Similarity and Dissimilarity operators satisfies the strict inequality:
\begin{equation} \label{eq:prop-5}
    \mathcal{K}_{5}^{s} \mathcal{K}_{5}^{d} > \mathcal{K}_{3}^{s} \mathcal{K}_{3}^{d}
\end{equation}
\end{restatable}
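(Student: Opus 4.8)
The plan is to decompose the desired inequality \eqref{eq:prop-5} into its similarity and dissimilarity contributions and to locate precisely where the strictness originates. The governing observation is that the dissimilarity operator $\mathcal{K}_{i,p}^{d} = 1/(1 + |\tilde{\boldsymbol{y}}_{p}^{d}|)$ is a strictly decreasing function of the cardinality $|\tilde{\boldsymbol{y}}_{p}^{d}|$, so the cardinality hypothesis \eqref{eq:card-constraint-5} will translate directly into a comparison between $\mathcal{K}_{5}^{d}$ and $\mathcal{K}_{3}^{d}$, while the defining set relations of $R_3$ and $R_5$ will pin down the two similarity factors exactly.

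First I would evaluate the similarity factor for $R_5$. Since $R_5$ asserts $\mathcal{S} \subsetneq \mathcal{T}$, we have $\mathcal{S} \cap \mathcal{T} = \mathcal{S}$, so by \eqref{eq:similarity} the factor attains its extreme value $\mathcal{K}_{5}^{s} = |\mathcal{S}|/|\mathcal{S}| = 1$. Next I would bound the similarity factor for $R_3$: the conditions $\mathcal{S} \cap \mathcal{T} \neq \varnothing$ and $\mathcal{S} \nsubseteq \mathcal{T}$ force the intersection to be a nonempty \emph{proper} subset of $\mathcal{S}$, whence $0 < |\mathcal{S} \cap \mathcal{T}| < |\mathcal{S}|$ and therefore $0 < \mathcal{K}_{3}^{s} < 1$. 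On the dissimilarity side, applying the monotonicity noted above to the hypothesis $|\tilde{\boldsymbol{y}}_{p_5}^{d}| \leq |\tilde{\boldsymbol{y}}_{p_3}^{d}|$ yields $\mathcal{K}_{5}^{d} \geq \mathcal{K}_{3}^{d}$, and both factors are strictly positive because each denominator is at least $1$.

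Finally I would chain these facts together. Using $\mathcal{K}_{5}^{s} = 1$, then $\mathcal{K}_{5}^{d} \geq \mathcal{K}_{3}^{d} > 0$, and then $\mathcal{K}_{3}^{s} < 1$, one obtains
\begin{equation}
\mathcal{K}_{5}^{s} \mathcal{K}_{5}^{d} = \mathcal{K}_{5}^{d} \geq \mathcal{K}_{3}^{d} > \mathcal{K}_{3}^{s} \mathcal{K}_{3}^{d},
\end{equation}
which is exactly \eqref{eq:prop-5}. Note that the strictness of the final step comes entirely from $\mathcal{K}_{3}^{s} < 1$ together with $\mathcal{K}_{3}^{d} > 0$; the dissimilarity comparison is used only as a weak inequality, so the hypothesis \eqref{eq:card-constraint-5} need not be strict.

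There is no serious obstacle here: the argument reduces to a monotonicity-plus-bounds computation. The only point requiring care is to confirm that strictness survives the boundary case $|\tilde{\boldsymbol{y}}_{p_5}^{d}| = |\tilde{\boldsymbol{y}}_{p_3}^{d}|$, where $\mathcal{K}_{5}^{d} = \mathcal{K}_{3}^{d}$. In that case the gap is supplied purely by the similarity factors, since $R_3$ guarantees $\mathcal{K}_{3}^{s}$ lies strictly below the maximal value $\mathcal{K}_{5}^{s} = 1$, so the two products can never coincide.
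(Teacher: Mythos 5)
Your proof is correct and rests on the same two ingredients as the paper's: the explicit values of the similarity/dissimilarity factors under the $R_3$ and $R_5$ set relations, and the monotone decay of $\mathcal{K}^{d}$ in the cardinality $|\tilde{\boldsymbol{y}}_{p}^{d}|$. The organization differs slightly: the paper forms the ratio $\mathcal{K}_{5}^{s}\mathcal{K}_{5}^{d} / \mathcal{K}_{3}^{s}\mathcal{K}_{3}^{d}$ and argues it exceeds $1$ using the bound $|\tilde{\boldsymbol{y}}_{p_3}| - |\tilde{\boldsymbol{y}}_{p_3}^{d}| \leq |\tilde{\boldsymbol{y}}_{i}|$ together with the hypothesis $|\tilde{\boldsymbol{y}}_{p_5}^{d}| \leq |\tilde{\boldsymbol{y}}_{p_3}^{d}|$, whereas you chain $\mathcal{K}_{5}^{s}\mathcal{K}_{5}^{d} = \mathcal{K}_{5}^{d} \geq \mathcal{K}_{3}^{d} > \mathcal{K}_{3}^{s}\mathcal{K}_{3}^{d}$ directly. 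Your version is actually the more careful of the two on the one point that matters: the paper's ratio argument as written combines two \emph{weak} inequalities and then asserts a \emph{strict} conclusion, which only goes through because the first inequality is in fact strict --- the $R_3$ condition $\mathcal{S} \nsubseteq \mathcal{T}$ forces $|\mathcal{S}\cap\mathcal{T}| < |\mathcal{S}|$, i.e.\ $\mathcal{K}_{3}^{s} < 1$ --- a fact the paper leaves implicit. You isolate exactly this as the source of strictness and verify that it survives the boundary case $|\tilde{\boldsymbol{y}}_{p_5}^{d}| = |\tilde{\boldsymbol{y}}_{p_3}^{d}|$, so your write-up is, if anything, a tightened version of the paper's argument rather than a departure from it.
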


%% Proof of Theorem 5
\begin{proof}
    See Appendix \ref{proof:thm5}.
\end{proof}

Theorem \ref{theorem:4} and \ref{theorem:5} establish strict dominance relations among relation types $R_3$, $R_4$, and $R_5$. Specifically, they demonstrate that $\mathcal{K}_{4}^{s} \mathcal{K}_{4}^{d} > \mathcal{K}_{3}^{s} \mathcal{K}_{3}^{d}$ when $|\tilde{\boldsymbol{y}}_{p_3}| = |\tilde{\boldsymbol{y}}_{p_4}|$ and, $\mathcal{K}_{5}^{s} \mathcal{K}_{5}^{d} > \mathcal{K}_{3}^{s} \mathcal{K}_{3}^{d}$ when $|\tilde{\boldsymbol{y}}_{p_5}^{d}| \leq |\tilde{\boldsymbol{y}}_{p_3}^{d}|$. 
These inequalities, proved through rigorous mathematical derivation using set cardinality properties and fundamental principles of real analysis, reveal a well-defined hierarchical structure in the weighting factors. This hierarchical relation ensures that our loss function appropriately modulates the contributions of different relation types during the learning process, providing theoretical guarantees for the effectiveness of our proposed approach in capturing complex relations within the data.

Overall, our theoretical analysis establishes a comprehensive mathematical foundation for the proposed loss function through five key theorems. These theoretical guarantees, derived through rigorous set-theoretic analysis, demonstrate that our loss function effectively modulates the contributions of different relation types while maintaining proper mathematical bounds, thereby providing a solid theoretical foundation for its application in multi-label contrastive learning.

\subsection{Re-weighting Mechanism} \label{sec:re-weighting}  

The proposed re-weighting mechanism is designed to assign soft weights to positive pairs in multi-label contrastive learning, ensuring that the contribution of each positive to the objective function reflects the degree of semantic alignment with the anchor. Formally, the weighting factor for a positive sample $p$ associated with anchor $i$ is given by the product  
\[
w_{i,p} \;=\; \mathcal{K}_{i,p}^{s}\,\mathcal{K}_{i,p}^{d}, \quad w_{i,p} \in [0,1],
\]  
where $\mathcal{K}_{i,p}^{s}$ captures similarity based on shared labels, and $\mathcal{K}_{i,p}^{d}$ penalizes mismatches due to additional labels.  

The similarity component $\mathcal{K}_{i,p}^{s}$ is monotonically increasing with respect to the number of shared labels between $i$ and $p$. Let $L(i)$ denote the set of labels associated with sample $i$. Then,  
\[
\mathcal{K}_{i,p}^{s} \;=\; f\big(|L(i)\cap L(p)|\big),
\]  
where $f(\cdot)$ is a non-decreasing mapping that promotes stronger alignment when $p$ and $i$ share more labels, thereby reflecting higher semantic agreement.  

The dissimilarity component $\mathcal{K}_{i,p}^{d}$ instead captures the penalty incurred by extraneous labels present in $p$ but absent in $i$. Specifically,  
\[
\mathcal{K}_{i,p}^{d} \;=\; g\big(|L(p)\setminus L(i)|\big),
\]  
where $g(\cdot)$ is a monotonically decreasing function, ensuring that positives with many additional labels contribute less, since these labels may correspond to irrelevant or noisy semantics.  

The resulting product $w_{i,p}$ quantifies the \emph{relational quality} between $i$ and $p$. Several important cases arise:  

\begin{enumerate}
    \item \textbf{Perfect alignment ($R_2$):} If $L(i) = L(p)$, then $\mathcal{K}_{i,p}^{s}$ is maximized and $\mathcal{K}_{i,p}^{d} = 1$, yielding $w_{i,p} \approx 1$.  
    \item \textbf{Partial overlap ($R_3$--$R_5$):} If $L(i)\cap L(p) \neq \varnothing$ but $L(i) \neq L(p)$, then $\mathcal{K}_{i,p}^{s}$ is non-zero but $\mathcal{K}_{i,p}^{d} < 1$, leading to an intermediate weight $0 < w_{i,p} < 1$.  
    \item \textbf{Disjoint labels ($R_1$):} If $L(i)\cap L(p) = \varnothing$, then $\mathcal{K}_{i,p}^{s} = 0$ and $w_{i,p} = 0$, ensuring no contribution to the loss.  
\end{enumerate}  

This formulation provides a principled mechanism for modulating the contrastive signal. High-quality positives, characterized by substantial semantic overlap and minimal extraneous labels, are emphasized in the embedding space. Conversely, positives with excessive label mismatches are downweighted, thereby mitigating the risk of noisy or misleading training signals. Importantly, the continuous nature of $w_{i,p}$ allows the model to smoothly calibrate the pulling force across a spectrum of semantic relationships, rather than making binary inclusion--exclusion decisions.  

Such a mechanism is particularly critical under long-tailed and ambiguous multi-label distributions, where the frequency of classes and the degree of label co-occurrence vary significantly. By balancing similarity and mismatch in a mathematically grounded manner, the re-weighting scheme ensures that the learned representations remain both semantically faithful and robust to noise.

\section{Experiments and Results}
We conduct experiments to compare Similarity-Dissimilarity Loss with baseline loss functions in a comprehensive evaluation, considering: (i) Data modality: image and text data; (ii) Domain-specific: general text data (AAPD) and medical domain (MIMIC III and IV); (iii) Data distribution: full setting (extreme long-tailed distribution) and top-50 frequent labels setting; (iv) ICD code versions: ICD-9 and ICD-10, and (v) Models: ResNet-50, RoBERTa-based, Llama-3.1-8B, and PLM-ICD.
We provide full details of experimental setup in Appendix \ref{app:exp}, and additional results and analysis in Appendix \ref{app:results-analysis}.

% For comprehensive information, please refer to computing cost analysis in Appendix \ref{sec:cost}, detailed dataset descriptions and evaluation metrics in Appendix \ref{sec:datasets and metrics}, baseline models and encoder architectures in Appendix \ref{sec:baselines and encoders}, and  implementation details in Appendix \ref{sec:implementation} 

\subsection{Evaluation on Image}
%
% short version
The experimental results in Table \ref{tab:results-image} demonstrate that our proposed loss function outperforms baselines across almost metrics on image datasets, with the sole exception of a lower mAP than Jaccard \citep{lin2023effective} and MSC \citep{audibert2024exploring} on PASCAL.
The substantial improvements in macro-F1 (Figure~\ref{figure-improve}) provide compelling evidence that our method demonstrates exceptional efficacy in addressing long-tailed distribution challenges, a capability particularly crucial in multi-label scenarios.

However, on the PASCAL dataset, the method yields only marginal gains due to its low average label cardinality (approximately 1.5), which causes the task to approximate single-label classification; consequently, multi-label loss functions exert limited influence, consistent with prior findings that label cardinality is a significant determinant of MSCL performance \citep{audibert2024exploring}.

Moreover, Figure \ref{figure-image} shows that PASCAL exhibits much lower variance across methods compared to MS-COCO and NUS-WIDE, indicating that specialized multi-label loss functions become less effective when label cardinality approaches one. This observation supports the theoretical analysis in Section \ref{sec:methods}, where the Similarity–Dissimilarity Loss reduces to single-label cases (Equation \ref{eq:condition}). A detailed analysis and discussion of the results on image data is provided in Appendix \ref{sec:image}

% tab:results-image
\begin{table*}[!htbp]%t
\centering
\renewcommand{\arraystretch}{1.2}
\caption{Results on image datasets (MS-COCO, PASCAL, and NUS-WIDE). We compare our proposed method (Sim-Dissim) with baselines. }
\label{tab:results-image}
\begin{tabular}{@{}l ccc ccc ccc@{}}
\toprule
\multirow{2}{*}{Method} & \multicolumn{3}{c}{\textbf{MS-COCO}} & \multicolumn{3}{c}{\textbf{PASCAL}} & \multicolumn{3}{c}{\textbf{NUS-WIDE}} \\
\cmidrule(lr){2-4} \cmidrule(lr){5-7} \cmidrule(l){8-10}
 & micro-F1 & macro-F1 & mAP & micro-F1 & macro-F1 & mAP & micro-F1 & macro-F1 & mAP \\ 
\midrule
ALL         & 68.93 & 63.32 & 64.11 & 82.53 & 79.87 & 79.32 & 70.25 & 52.84 & 51.35 \\
ANY         & 64.80 & 57.37 & 56.90 & 82.31 & 79.65 & 79.15 & 68.42 & 50.65 & 49.28 \\
MulSupCon   & 71.33 & 66.25 & 67.69 & 82.75 & 80.26 & 79.58 & 71.88 & 54.36 & 52.47 \\
Sim-Dissim  & \textbf{73.40} & \textbf{70.03} & \textbf{69.20} & \textbf{83.63} & \textbf{81.10} & 79.75 & \textbf{73.35} & \textbf{57.49} & \textbf{56.74} \\
Jaccard     & 69.81 & 64.22 & 65.92 & 82.53 & 79.86 & \textbf{80.09} & 71.07 & 52.84 & 51.25 \\
Class proto & 71.88 & 67.35 & 68.32 & 81.85 & 79.75 & 78.06 & 71.79 & 56.03 & 52.95 \\
MSC         & 71.96 & 67.54 & 68.38 & 82.56 & 80.39 & 80.02 & 72.02 & 56.13 & 52.87 \\
\bottomrule
\end{tabular}
\end{table*}

% figure-improve
% figure-image
\begin{figure}[!htbp]%t
\centering
\begin{subfigure}[b]{0.49\textwidth}
    \centering
    \includegraphics[height=4.2cm, keepaspectratio]{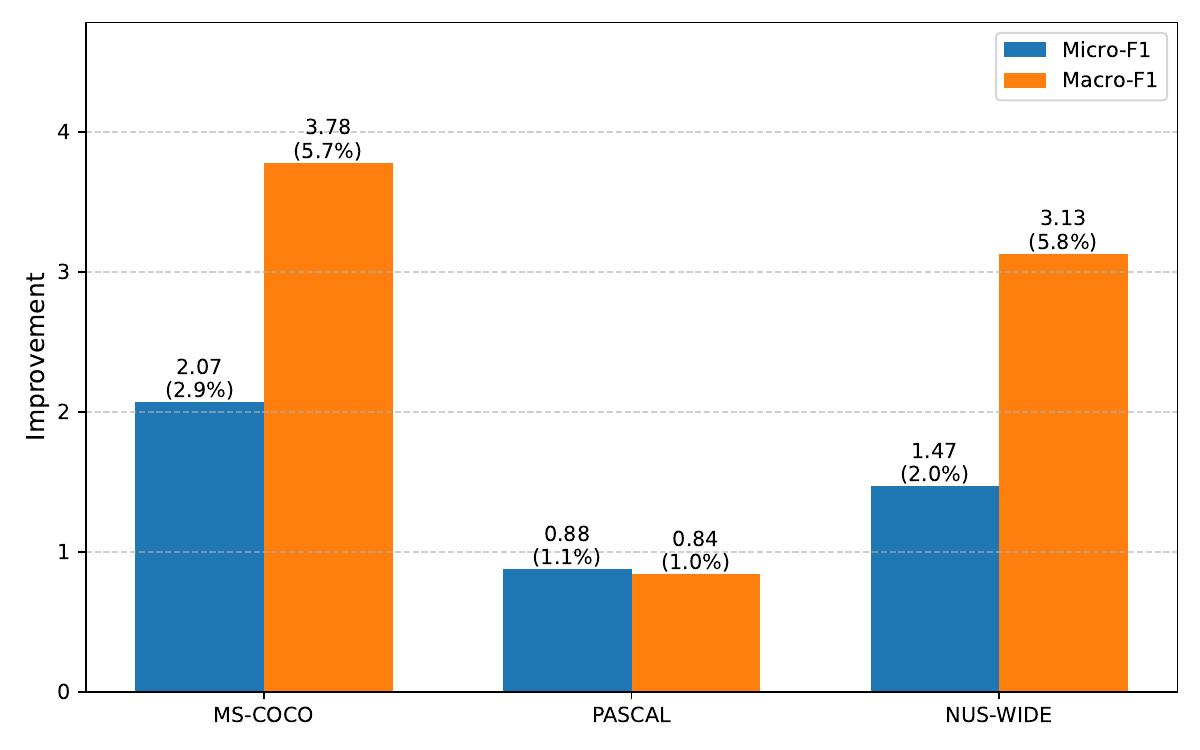}
    \caption{}
    \label{figure-improve}
\end{subfigure}
\hfill
\begin{subfigure}[b]{0.49\textwidth}
    \centering
    \includegraphics[height=4.2cm, keepaspectratio]{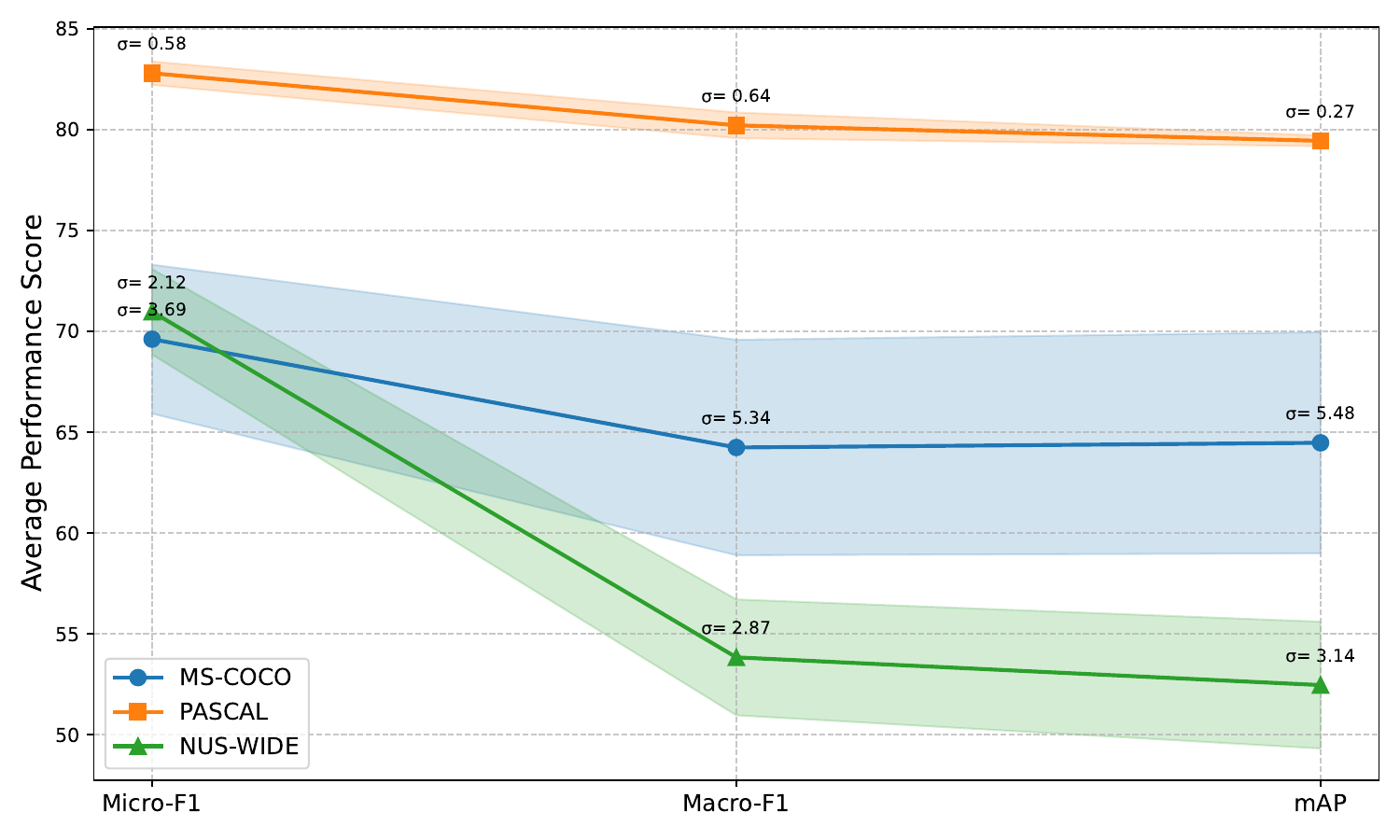}
    \caption{}
    \label{figure-image}
\end{subfigure}
\caption{(a) Comparison of performance improvements between Similarity-Dissimilarity Loss and MulSupCon with micro- and macro-F1 metrics. (b) Comparison standard deviation of image datasets on micro-F1, macro-F1 and mAP metrics.}
\end{figure}

\subsection{Evaluation on Text} \label{sec:text}
We further evaluate our method on general text data, and the results demonstrate that our proposed loss function consistently surpasses baseline methods for both RoBERTa and Llama models across all metrics on the AAPD dataset (See Table \ref{tab:results-text}). In contrast to the significant performance gains observed on image data, Similarity-Dissimilarity Loss achieves more modest enhancements of $0.90/1.79$ in micro/macro-F1 scores on RoBERTa, and $0.89/1.84$ on Llama. This attenuated performance differential can be attributed to the extensive knowledge already encoded within LLMs through their comprehensive pre-training paradigms \citep{yang2024harnessing}.

Moreover, as illustrated in Figure \ref{figure-text}, performance variations of contrastive loss functions for MSCL on both RoBERTa and Llama models are relatively minimal. Specifically, the standard deviations in micro-F1 are $0.80$ and $0.79$ on RoBERTa and Llama, respectively, while the corresponding standard deviations for macro-F1 metrics are $1.41$ and $1.42$. Unlike image classification in MSCL paradigm, performance improvements in text classification are predominantly attributable to the intrinsic representational capabilities of model architecture of LLMs. Consequently, while fine-tuning the pre-trained weights of LLMs during the contrastive learning phase can yield marginal performance improvements, this methodological approach demonstrates substantially greater efficacy for visual classification tasks compared to textual classification.

% tab:results-text
\begin{table}[!htbp]
\centering
\renewcommand{\arraystretch}{1.2}
\caption{Results on AAPD Dataset. We compare our proposed method (Sim-Dissim) with baselines on general text data using RoBERTa-based and Llama-3.1-8B models.}
\label{tab:results-text}
\begin{tabular}{@{}l cccc@{}}
\toprule
\multirow{2}{*}{Method} & \multicolumn{2}{c}{\textbf{RoBERTa}} & \multicolumn{2}{c}{\textbf{Llama}} \\
\cmidrule(lr){2-3} \cmidrule(l){4-5}
                               & Micro-F1     & Macro-F1     & Micro-F1    & Macro-F1    \\
\midrule
ALL                            & 73.23        & 59.41        & 74.32       & 60.47       \\
ANY                            & 72.31        & 58.55        & 73.41       & 59.63       \\
MulSupCon                      & 73.64        & 60.52        & 74.72       & 61.58       \\
Sim-Dissim                  & \textbf{74.54} & \textbf{62.31} & \textbf{75.61} & \textbf{63.42} \\
\bottomrule
\end{tabular}
\end{table}

% figure-text
\begin{figure}[!htbp]
\centering
\includegraphics[width=0.8\columnwidth]{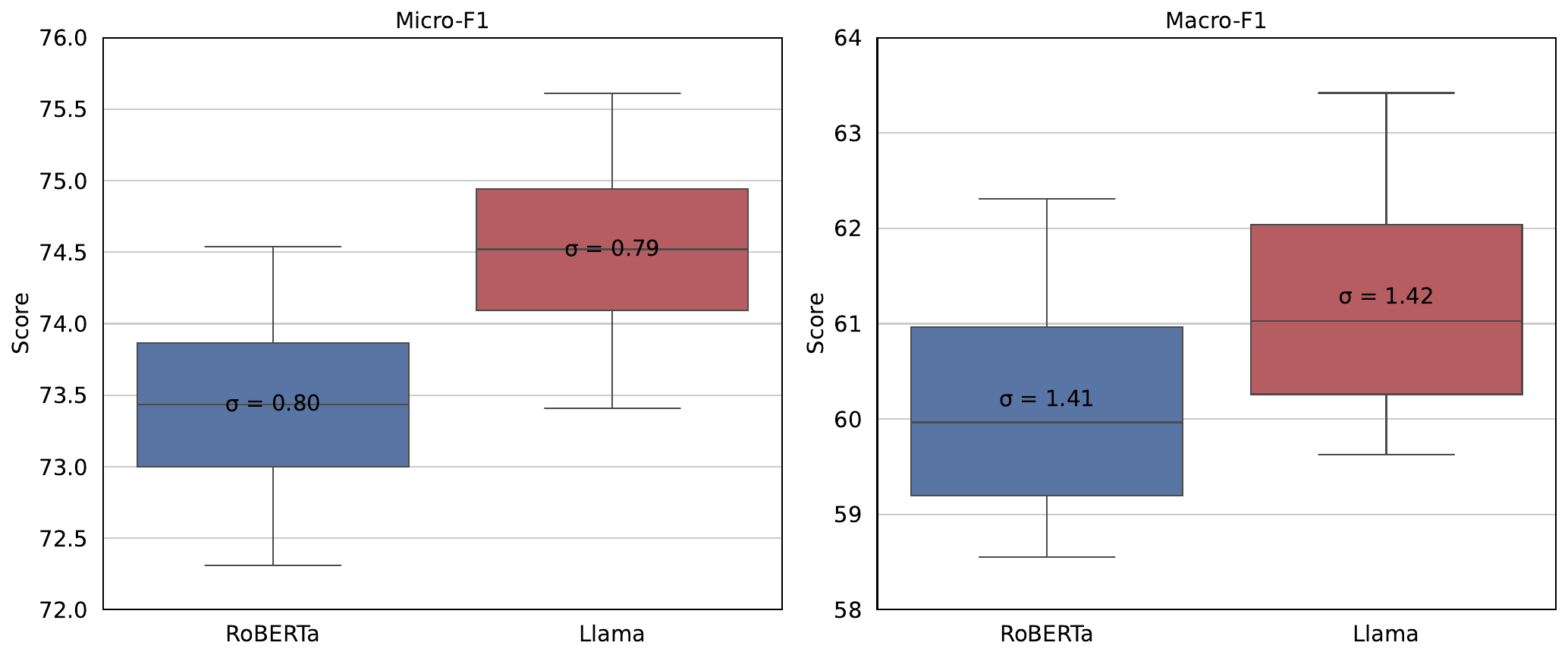}
\caption{Comparison of RoBERTa and Llama across micro- and macro-F1 on AAPD Dataset.}
\label{figure-text}
\end{figure}

% \subsection{Evaluation on Text and Medical Domain}
% We conduct comprehensive evaluations on several text datasets, including AAPD, MIMIC-III, and MIMIC-IV, with the MIMIC datasets belonging to the medical domain. Detailed results and analyses on AAPD are provided in Appendix \ref{sec:text}, while the extensive evaluation and discussion of the medical domain are presented in Appendix \ref{sec:results on medical domain}. Furthermore, our proposed method achieves state-of-the-art performance on MIMIC-III-Full by employing the MSCL framework with Similarity–Dissimilarity Loss and the PLM-ICD encoder (see Table \ref{tab:mimic-sota} and Appendix \ref{sec:results on medical domain}).

% More experiment results for the evaluation on medical domain in Appendix \ref{sec:results on medical domain}

\subsection{Evaluation on Medical Domain}

Figure \ref{figure-mimic-heatmap} reports the gain of our Similarity--Dissimilarity Loss over the strongest baseline for each encoder and dataset setting, showing predominantly positive deltas across both Full and Top-50 label spaces; the improvements are particularly evident on class-balanced metrics (e.g., macro-F1 and macro-AUC), consistent with the claim that dissimilarity-aware re-weighting mitigates noisy positives and benefits tail-label learning. 
% Notably, improvements are more pronounced on the class-balanced metrics (Macro-F1 and, in several cases, Macro-AUC), suggesting that the dissimilarity-aware weighting is particularly effective in mitigating noisy positives induced by partial label overlap and in strengthening learning signals for tail labels. 
In contrast, micro-level metrics (micro-AUC/micro-F1) tend to be more saturated for strong encoders, yet the proposed method still delivers steady, incremental gains, reflecting improved calibration of the positive set without degrading performance on frequent labels.

Additionally, Figure \ref{figure-mimic-dotplot} confirms that these gains translate into stable performance across metrics and encoders rather than isolated wins, indicating that the proposed objective provides a generally better inductive bias for multi-label relations under both long-tailed (Full) and reduced-label (Top-50) regimes. Moreover, our MSCL framework (PLM-ICD encoder) with Similarity-Dissimilarity Loss outperforms the previous state-of-the-art methods (see Table \ref{tab:mimic-sota}). 
We provide more experimental results and detailed analysis on medical domain in Appendix \ref{sec:results on medical domain}.

% tab:mimic-sota
\begin{table}[!htbp]
\centering
\caption{Comparison with previous state-of-the-art medical methods. MRR and AKIL rely on DRG codes, CPT codes, and medications, which are manually annotated for each sample by human coders. These methods are included for reference, although a direct comparison with our approach is not strictly fair. Our method adopts the MSCL framework with Similarity–Dissimilarity Loss and a PLM-ICD encoder.}
\label{tab:mimic-sota}
\begin{tabular}{@{}lccccc@{}}
\toprule
\multicolumn{1}{c}{\multirow{2}{*}{Models}} & \multicolumn{2}{c}{AUC}       & \multicolumn{2}{c}{F1}        & \multirow{2}{*}{P@8} \\ \cmidrule(lr){2-5}
\multicolumn{1}{c}{}                        & Macro         & Micro         & Macro         & Micro         &                      \\ \midrule
CAML \citep{mullenbach2018explainable}                                        & 89.5          & 98.6          & 8.8           & 53.8          & 70.9                 \\
MSATT-KG \citep{xie2019ehr}                                    & 91.0          & 99.2          & 9.0           & 55.3          & 72.8                 \\
MSMN \citep{yuan2022code}                                        & 95.0          & 99.2          & 9.4           & 58.4          & 75.2                 \\
KEPTLongformer \citep{yang2022knowledge}                              & -             & -             & 11.8          & 59.9          & 77.1                 \\
PLM-ICD \citep{huang2022plm}                                    & 92.6          & 98.9          & 10.4          & 59.8          & 77.1                 \\
PLM-CA \citep{edin2024unsupervised}                                      & 91.6          & 98.9          & 8.9           & 57.3          & 74.5                 \\
CoRelation \citep{luo2024corelation}                                  & 95.2          & 99.2          & 10.2          & 59.2          & 76.2                 \\
GKI-ICD \citep{zhang2025general}                                     & \textbf{96.2} & 99.3          & 12.3          & \textbf{62.3} & 77.7                 \\
Sim-Dissim (ours)                                         & 94.5          & \textbf{99.4} & \textbf{12.5} & \textbf{62.3} & \textbf{78.4}        \\ \midrule
MRR \citep{wang2024multi}                                         & 94.9          & 99.5          & 11.4          & 60.3          & 77.5                 \\
AKIL \citep{wang2024auxiliary}                                        & 94.8          & 99.4          & 11.2          & 60.6          & 78.4                 \\ \bottomrule
\end{tabular}
\end{table}

% figure-mimic-dotplot
\begin{figure*}[!htbp]
\centering
\includegraphics[width=1\textwidth]{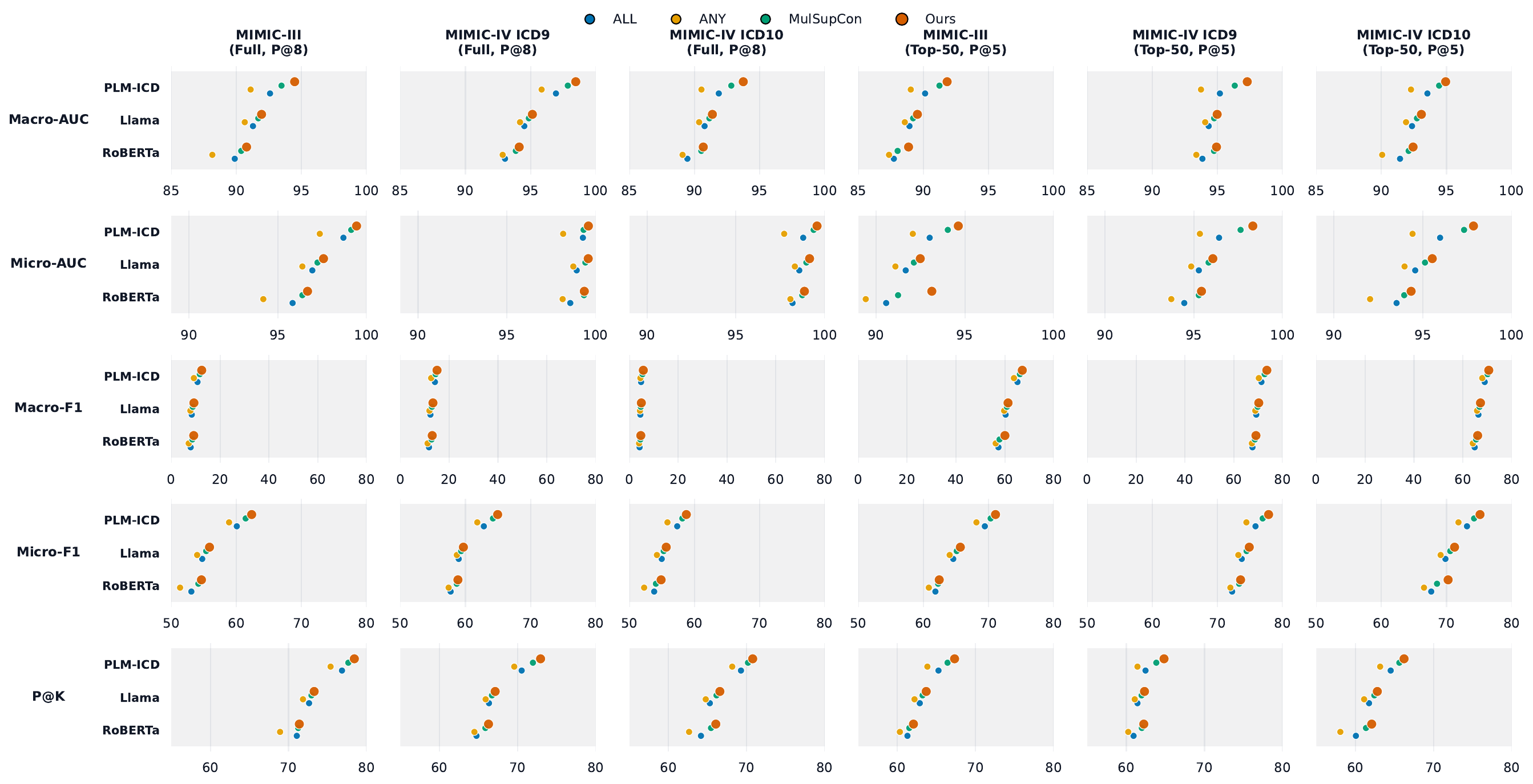} % Reduce the figure size so that it is slightly narrower than the column.
\caption{Performance on MIMIC shown as small-multiple dot plots (faceted by dataset and label space; Full vs. Top-50).
}
\label{figure-mimic-dotplot}
\end{figure*}

% figure-mimic-heatmap
\begin{figure*}[!htbp]
\centering
\includegraphics[width=1\textwidth]{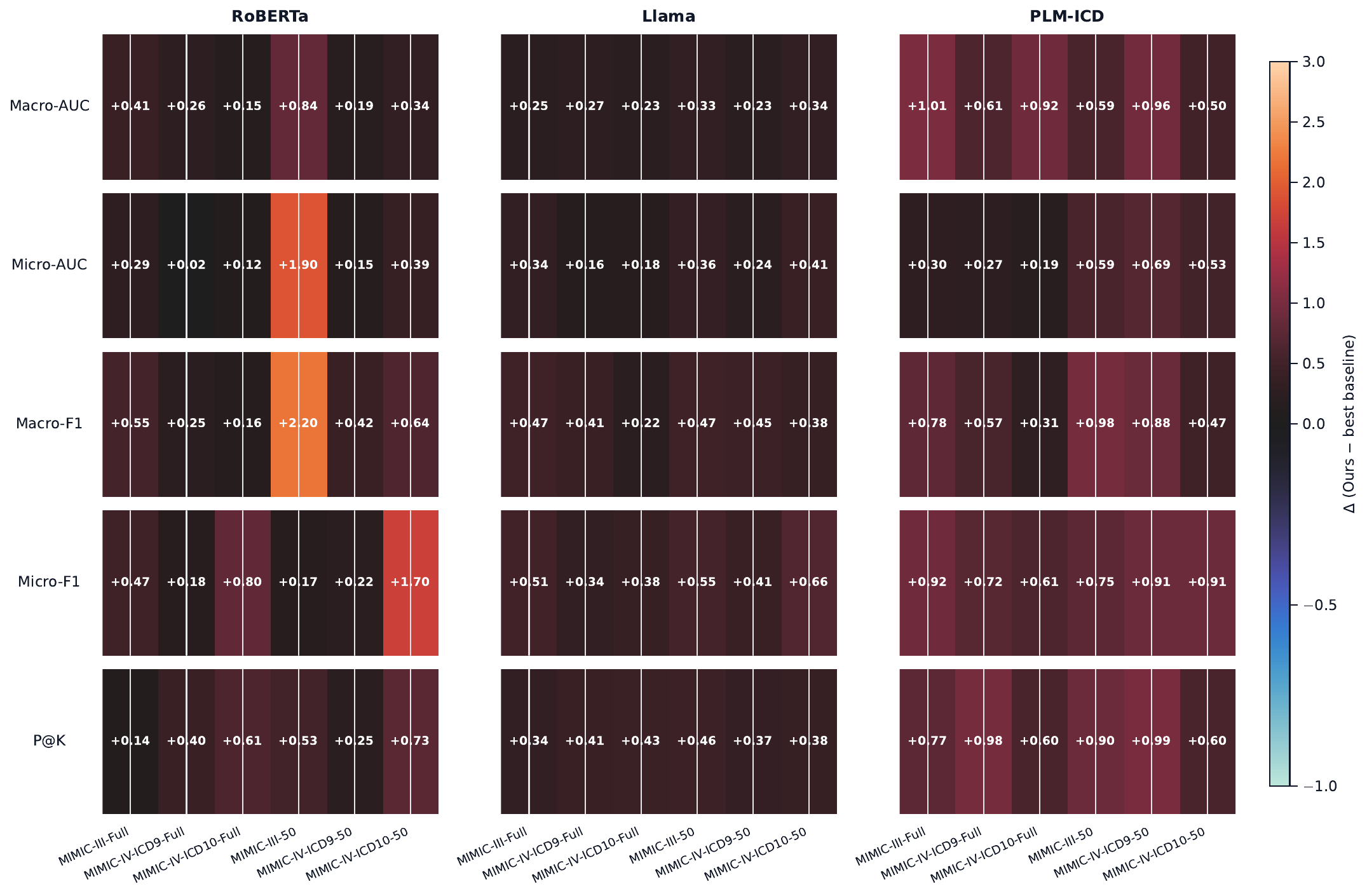} % Reduce the figure size so that it is slightly narrower than the column.
\caption{Gain heatstrips on MIMIC comparing our Similarity--Dissimilarity Loss to the strongest baseline for each encoder and setting. Each cell reports $\Delta=\text{Ours}-\max(ALL,ANY,MulSupCon)$ on a given metric; warm colors indicate improvements and cool colors indicate regressions (centered at $0$).
}
\label{figure-mimic-heatmap}
\end{figure*}

\section{Ablation Study and Robustness Analysis}

Table \ref{tab:ablation} confirms that both factors contribute to performance and that their combination is crucial for obtaining strong and consistent improvements. Although the Similarity-only variant already improves substantially, adding the Dissimilarity factor on top produces a further leap, indicating a synergistic effect that strengthens informative positives while suppressing noisy/partially-related pairs.

To further examine the robustness of the proposed objective, we provide additional analyses in Appendix \ref{app:robustness}. Specifically, we evaluate performance across long-tail label-frequency buckets and study sensitivity to batch size. These results show how the Similarity-Dissimilarity Loss behaves under label imbalance and varying numbers of in-batch positives, these factors that are particularly important in multi-label supervised contrastive learning.

% table: ablation study
\begin{table}[!htbp]
\centering
\renewcommand{\arraystretch}{1.2}
\caption{Ablation study of the comparison of the proposed loss function with and without \textit{Similarity} and \textit{Dissimilarity} factors on image dataset MS-COCO and text dataset AAPD. Encoders and training follow the settings in Appendix \ref{sec:implementation}}
\label{tab:ablation}
\begin{tabular}{@{}cccccc@{}}
\toprule
\multicolumn{2}{c}{\textbf{Factors}} & \multicolumn{2}{c}{\textbf{MS-COCO}} & \multicolumn{2}{c}{\textbf{AAPD}} \\ \cmidrule(lr){1-2}\cmidrule(lr){3-4}\cmidrule(lr){5-6}
\textit{Similarity}      & \textit{Dissimilarity}      & Micro-F1          & Macro-F1         & Micro-F1        & Macro-F1        \\ \midrule
$\times$              & $\times$                 & 64.80             & 57.37            & 72.31           & 58.55           \\
$\checkmark$             & $\times$                 & 67.22             & 62.58            & 73.18           & 59.89           \\
$\times$              & $\checkmark$                & 65.95             & 59.32            & 72.49           & 58.03           \\
$\checkmark$             & $\checkmark$                & \textbf{73.40}    & \textbf{70.03}   & \textbf{74.54}  & \textbf{62.31}  \\ \bottomrule
\end{tabular}
\end{table}

\section{Conclusion}
% In this paper, we introduce multi-label relations and propose a Similarity-Dissimilarity Loss for MSCL, which dynamically re-weights the loss by the \textit{similarity-dissimilarity factor}. We provide theoretical grounded proof for our method through rigorous mathematical analysis that supports the formulation and effectiveness of the proposed loss function. The experimental results further confirm the effectiveness and robustness of our method in image, text and medical domain. 
% \deleted{Despite the advancements, limitations warrant acknowledgment. The Similarity-Dissimilarity Loss, while effective via linear factors in MSCL, may benefit from further explore non-linear similarity-dissimilarity weighting factors and learnable hyperparameter for contrastive loss function to capture more complex semantic relations and long-tailed distribution.}

In this paper, by systematically formulating multi-label relations, we establish a principled framework for sample identification in MSCL. Building on this foundation, we propose the \textit{Similarity–Dissimilarity Loss}, which dynamically re-weights samples based on similarity and  dissimilarity factors, supported by rigorous theoretical analysis that establishes its mathematical soundness and generalizability to both single-label and multi-label settings. Extensive experiments across image, text, and medical domains demonstrate that our method consistently outperforms strong baselines, achieving state-of-the-art results on the MIMIC datasets. These findings highlight the effectiveness, robustness, and broad applicability of the proposed approach, offering a unified paradigm that advances the study and practice of contrastive learning in complex multi-label scenarios. Moreover, the limitations of this work is discussed in Appendix \ref{sec:limitation}.

%% References
\bibliography{main}
\bibliographystyle{tmlr}

%% Appendix Contents
\newpage
\part*{Appendix}
\addcontentsline{toc}{part}{Appendix} % 可选：把“Appendix”这个分区加到总目录
\etocsetnexttocdepth{subsection}       % 目录显示到 subsection；只要 section 就改成 {section}
\localtableofcontents                   % 只包含当前 part 里的条目
\newpage

\appendix
\section{Further Explanation of Similarity-Dissimilarity Loss} \label{sec:explanation}
\subsection{Explanation of Scaling Temperature} \label{sec:temperature}

We now explain why we introduce the similarity–dissimilarity weighting by multiplying the factor $\mathcal{K}_{i,p}^{s}\mathcal{K}_{i,p}^{d}$ directly into the numerator of the log-softmax loss (Equation \ref{eq:loss-similarity-dissimilarity}), rather than scaling the temperature parameter $\tau$ as the formulation is given by
\begin{equation}\label{eq:loss-similarity-dissimilarity2}
    \mathcal{L}_{i}^{\text{our}} = \frac{-1}{|\mathcal{P}(i)|} \sum_{p \in \mathcal{P}(i)} \log \frac{ \exp\!\left((\boldsymbol{z}_{i} \cdot \boldsymbol{z}_{p})/ \tau \cdot \mathcal{K}_{i,p}^{s}\mathcal{K}_{i,p}^{d}\right)}{\sum_{a \in \mathcal{A}(i)} \exp((\boldsymbol{z}_{i} \cdot \boldsymbol{z}_{a})/ \tau)}.
\end{equation}

\textbf{Theoretical Generalization of Supervised Contrastive Loss.} Our formulation (Equation \ref{eq:loss-similarity-dissimilarity}) directly generalizes the standard supervised contrastive loss (SupCon) \citep{khosla2020supervised}, which uniformly weights all positives. When $\mathcal{K}_{i,p}^{s} = 1$ and $\mathcal{K}_{i,p}^{d} = 1$ (see Equation \ref{eq:condition}), it reduces exactly to Equation \ref{eq:loss-all-any}. This structural compatibility would be lost if we instead redefined or modulated the temperature.

\textbf{Role and Effectiveness of Temperature in Contrastive Loss.} In the original contrastive learning framework, the temperature hyperparameter $\tau$ controls the sharpness of the softmax distribution over similarity scores. As demonstrated in prior works such as SimCLR \citep{chen2020simple} and further analyzed by \citet{wang2021understanding}, $\tau$ regulates the concentration level of learned representations: a smaller $\tau$ increases sensitivity to similarity differences, while a larger $\tau$ smooths the gradient signal.

Crucially, temperature $\tau$ acts globally, affecting all pairwise comparisons, including both positive and negative samples. This global effect makes $\tau$ unsuitable for capturing fine-grained, relation-specific weightings such as the similarity–dissimilarity factor, which we intend to apply only to positive samples based on their multi-label relations to the anchor.

From a mathematical perspective, applying the similarity–dissimilarity factor to scaling $\tau$ introduces several issues:
\begin{itemize}
    \item Global impact on loss structure: Scaling $\tau$ alters contributions of both positive and negative samples, contradicting our objective of selectively modulating the weights of positive pairs based on semantic relations ($R_2$–$R_5$).
    \item Instability of the loss function: Contrastive loss is highly sensitive to variations in $\tau$. Allowing $\tau$ to vary with each pair according to the similarity–dissimilarity factor may lead to unstable gradients, particularly when the factor approaches $0$.
    \item Theoretical inconsistency: Since $\mathcal{K}_{i,p}^{s}\mathcal{K}_{i,p}^{d} \in [0,1]$ but $\tau \neq 0$ by definition, scaling $\tau$ by this factor is mathematically problematic.
\end{itemize}

\subsection{Choice of dissimilarity penalty function} \label{sec:penalty}

\textbf{Motivation for Using $\mathcal{K}^{d}(x) = \frac{1}{1+x}$.} 

The dissimilarity factor is intended to softly penalize positive samples that contain additional labels not present in the anchor. Our goals in selecting the function $\mathcal{K}^{d}(x) = \frac{1}{1+x}$ were to ensure the following properties:
\begin{itemize}
    \item Boundedness: The function maps dissimilarity to the closed interval $(0,1]$, which ensures that the contrastive weight remains numerically stable during training.
    \item Monotonic decay: As dissimilarity increases ($x \uparrow$), the penalty decreases smoothly, discouraging high-variance positives from dominating the loss.
    \item No hyperparameter sensitivity: The function is parameter-free, making it robust across datasets without the need to tune decay rates or thresholds.
\end{itemize}

These properties collectively ensure stable gradients, interpretability, and alignment with the goal of penalizing semantically distant positives without completely discarding them.

\textbf{Comparison to exponential decay function $\mathcal{K}^{d} = \exp (-\alpha x)$}:
\begin{itemize}
    \item While this offers a sharper decay, it introduces a sensitive hyperparameter $\alpha$
    \item For small values of $x$, the decay may be too aggressive or too mild depending on $\alpha$, making behavior inconsistent across datasets with different label cardinalities.
    \item It can result in vanishing gradients for large $x$, reducing the contribution of moderately dissimilar positives more than desired.
\end{itemize}

\textbf{Simplicity, Interpretability, and Compatibility.} The chosen form $\mathcal{K}^{d}(x) = \tfrac{1}{1+x}$ offers several advantages:
\begin{itemize}
    \item Simplicity: no tuning required;
    \item Smoothness: continuous, differentiable, non-zero everywhere. Because $x \geq 0$ and $\frac{1}{1+x}$ is only non-differentiable when $x = -1$.
    \item Compatibility with our similarity term $K^{s}$, which also scales linearly in set cardinality.
\end{itemize}

Together, the similarity-dissimilarity factor $\mathcal{K}_{i,p}^{s}\mathcal{K}_{i,p}^{d}$ maintains closed-form interpretability, avoids numerical instability, and allows for clean theoretical analysis, as seen in Theorems 1–5.

\section{Proofs of Lemma and Theorem}
\subsection{Proof of Lemma 1}\label{proof:lem1}

\begin{proof}
We consider the supervised contrastive loss defined in Equation \ref{eq:loss-all-any}. 
For an anchor $i \in \mathcal{I}$, the loss is given by

\begin{equation}
    \mathcal{L}_{i}^{} = \frac{-1}{|\mathcal{P}(i)|} \sum_{p \in \mathcal{P}(i)} \log \frac{\exp(\boldsymbol{z}_{i} \cdot \boldsymbol{z}_{p} / \tau)}{\sum_{a \in \mathcal{A}(i)} \exp(\boldsymbol{z}_{i} \cdot \boldsymbol{z}_{a} / \tau)} \nonumber
\end{equation}

By definition of the positive set $\mathcal{P}(i)$ in the multi-label supervised contrastive setting, if $\tilde{\boldsymbol{y}}_{p} = \tilde{\boldsymbol{y}}_{i}$, then $p \in \mathcal{P}(i)$, and thus $(i,p)$ constitutes a positive pair.

For any such $p$, the corresponding term in the loss is

\begin{equation}
    -\log \frac{\exp(\boldsymbol{z}_{i} \cdot \boldsymbol{z}_{p} / \tau)}{\sum_{a \in \mathcal{A}(i)} \exp(\boldsymbol{z}_{i} \cdot \boldsymbol{z}_{a} / \tau)}. \nonumber
\end{equation}

Minimizing this term is equivalent to maximizing the numerator  $\exp(\boldsymbol{z}_{i} \cdot \boldsymbol{z}_{p} / \tau)$, while relatively suppressing contributions from all other samples in the denominator. Therefore, the optimization objective enforces

$$
\boldsymbol{z}_{i} \cdot \boldsymbol{z}_{p}
\quad \text{to increase for all } p \in \mathcal{P}(i).
$$

At the same time, for any $a \notin \mathcal{P}(i)$, the loss implicitly penalizes large values of $\boldsymbol{z}_{i} \cdot \boldsymbol{z}_{a}$, creating a separation between positive and non-positive samples.

At convergence (i.e., near a stationary point of $\mathcal{L}_i$), this yields

$$
\boldsymbol{z}_{i} \cdot \boldsymbol{z}_{p}
\gg
\boldsymbol{z}_{i} \cdot \boldsymbol{z}_{a}, 
\quad \forall a \notin \mathcal{P}(i).
$$

Assuming standard normalization of representations (i.e., $\|\boldsymbol{z}_{i}\| = \|\boldsymbol{z}_{p}\| = 1$), we have

$$
\boldsymbol{z}_{i} \cdot \boldsymbol{z}_{p} \le 1,
$$

and the loss is minimized when

$$
\boldsymbol{z}_{i} \cdot \boldsymbol{z}_{p} \to 1.
$$

This implies that the angle between $z_{i}$ and $z_{p}$ tends to zero, and hence

$$
\boldsymbol{z}_{i} \simeq \boldsymbol{z}_{p}.
$$

Therefore, if $\tilde{\boldsymbol{y}}_{p} = \tilde{\boldsymbol{y}}_{i}$, their projected representations become arbitrarily close under the contrastive learning objective, which completes the proof.

\end{proof}

\subsection{Proof of Theorem 1}\label{proof:thm1}
% Repeat Theorem 1
\TheoremOne*

%% Proof of Theorem 1
\begin{proof}
Consider the case where $m = 1$. By definition, we have $\mathcal{S} \cap \mathcal{T} = \varnothing$. This implies:
\begin{align*}
    |\tilde{\boldsymbol{y}}_{p}^{s}| &= |\mathcal{S} \cap \mathcal{T}| = |\varnothing| = 0 \\
    \therefore \mathcal{K}_{1}^{s} &= \frac{|\tilde{\boldsymbol{y}}_{p}^{s}|}{|\tilde{\boldsymbol{y}}_{i}|} = \frac{0}{|\tilde{\boldsymbol{y}}_{i}|} = 0
\end{align*}
Since $\mathcal{K}_{1}^{s} = 0$ and $\mathcal{K}_{1}^{d}$ is finite by construction, we conclude:
\begin{equation}
    \mathcal{K}_{1}^{s} \mathcal{K}_{1}^{d} = 0 \cdot \mathcal{K}_{1}^{d} = 0 \quad 
    % \square
\end{equation}
\end{proof}

\subsection{Proof of Theorem 2}\label{proof:thm2}
% Repeat Theorem 2
\TheoremTwo*

%% Proof of Theorem 2
\begin{proof}
Consider the case where $m = 2$. By hypothesis, we have $\mathcal{S} = \mathcal{T}$. This equality implies:
\begin{align*}
    \mathcal{K}_{2}^{s} &= \frac{|\mathcal{S} \cap \mathcal{T}|}{|\mathcal{S}|} = \frac{|\mathcal{S}|}{|\mathcal{S}|} = 1 \\
    \mathcal{K}_{2}^{d} &= \frac{1}{1 + |\mathcal{T} \setminus (\mathcal{S} \cap \mathcal{T})|} = \frac{1}{1 + |\varnothing|} = 1
\end{align*}
where we have used the fact that $\mathcal{T} \setminus (\mathcal{S} \cap \mathcal{T}) = \varnothing$ when $\mathcal{S} = \mathcal{T}$.
Thus, we conclude:
\begin{equation}
    \mathcal{K}_{2}^{s} \mathcal{K}_{2}^{d} = 1 \cdot 1 = 1 \quad 
    % \square
\end{equation}
\end{proof}

\subsection{Proof of Theorem 3}\label{proof:thm3}
% Repeat Theorem 3
\TheoremThree*

%% Proof of Theorem 3
\begin{proof}
Consider $m \in \{3,4,5\}$. Under these cases, we have:
\begin{align}
    \mathcal{S} \cap \mathcal{T} &\neq \varnothing \label{eq:non_empty} \\
    \mathcal{S} &\neq \mathcal{T} \label{eq:non_equal}
\end{align}

We first establish the strict positivity. Given $|\mathcal{S}| > 0$ and conditions \ref{eq:non_empty}-\ref{eq:non_equal}, we have:
\begin{align*}
    \mathcal{K}_{m}^{s} &= \frac{|\mathcal{S} \cap \mathcal{T}|}{|\mathcal{S}|} > 0 \\
    \mathcal{K}_{m}^{d} &= \frac{1}{1 + |\mathcal{T} \setminus (\mathcal{S} \cap \mathcal{T})|} > 0
\end{align*}

For the upper bound, we consider three cases:

Case 1 ($m = 3$): By Equation \ref{eq:r3}, we have three conditions: 
$\mathcal{S} \cap \mathcal{T} \neq \varnothing$, $\mathcal{S} \nsubseteq \mathcal{T}$, and $\mathcal{T} \nsubseteq \mathcal{S}$. 
These conditions lead to:
\begin{align*}
    |\mathcal{S} \cap \mathcal{T}| &< |\mathcal{S}| \implies \mathcal{K}_{3}^{s} < 1 \\
    |\mathcal{T} \setminus (\mathcal{S} \cap \mathcal{T})| &> 0 \implies \mathcal{K}_{3}^{d} < 1
\end{align*}
Therefore, $\mathcal{K}_{3}^{s} \mathcal{K}_{3}^{d} < 1$.

Case 2 ($m = 4$): When $m = 4$, by Equation \ref{eq:r4}, we have $\mathcal{S} \supseteq \mathcal{T}$. This subset relation implies:
\begin{align*}
    \mathcal{K}_{4}^{s} &= \frac{|\mathcal{S} \cap \mathcal{T}|}{|\mathcal{S}|} = \frac{|\mathcal{T}|}{|\mathcal{S}|} < 1 \\
    \mathcal{K}_{4}^{d} &= \frac{1}{1 + |\mathcal{T} \setminus (\mathcal{S} \cap \mathcal{T})|} = \frac{1}{1 + |\varnothing|} = 1
\end{align*}
where the strict inequality $\mathcal{K}_{4}^{s} < 1$ follows from $|\mathcal{T}| < |\mathcal{S}|$ (since $\mathcal{S} \supsetneq \mathcal{T}$), and $\mathcal{K}_{4}^{d} = 1$ is a consequence of $\mathcal{T} \setminus (\mathcal{S} \cap \mathcal{T}) = \varnothing$ when $\mathcal{S} \supsetneq \mathcal{T}$. Therefore:
\begin{equation*}
    \mathcal{K}_{4}^{s} \mathcal{K}_{4}^{d} = \mathcal{K}_{4}^{s} \cdot 1 = \mathcal{K}_{4}^{s} < 1
\end{equation*}

% Case 3 ($m = 5$): We have $\mathcal{K}_{5}^{s} = 1$ and $\mathcal{K}_{5}^{d} < 1$, yielding $\mathcal{K}_{5}^{s} \mathcal{K}_{5}^{d} < 1$.

Case 3 ($m = 5$): When $m = 5$, by Equation \ref{eq:r5}, we have $\mathcal{S} \subsetneq \mathcal{T}$. This subset relation implies:
\begin{align*}
    \mathcal{K}_{5}^{s} &= \frac{|\mathcal{S} \cap \mathcal{T}|}{|\mathcal{S}|} = \frac{|\mathcal{S}|}{|\mathcal{S}|} = 1 \\
    \mathcal{K}_{5}^{d} &= \frac{1}{1 + |\mathcal{T} \setminus (\mathcal{S} \cap \mathcal{T})|} = \frac{1}{1 + |\mathcal{T} \setminus \mathcal{S}|} < 1
\end{align*}
where $\mathcal{K}_{5}^{s} = 1$ follows from the fact that $\mathcal{S} \cap \mathcal{T} = \mathcal{S}$ when $\mathcal{S} \subsetneq \mathcal{T}$. The strict inequality $\mathcal{K}_{5}^{d} < 1$ holds because:
\begin{align*}
    \mathcal{S} \subsetneq \mathcal{T} &\implies |\mathcal{T} \setminus \mathcal{S}| > 0 \\
    &\implies 1 + |\mathcal{T} \setminus \mathcal{S}| > 1 \\
    &\implies \frac{1}{1 + |\mathcal{T} \setminus \mathcal{S}|} < 1
\end{align*}
Therefore, we can conclude:
\begin{equation*}
    \mathcal{K}_{5}^{s} \mathcal{K}_{5}^{d} = 1 \cdot \mathcal{K}_{5}^{d} = \mathcal{K}_{5}^{d} < 1
\end{equation*}

Combining the results with Propositions \ref{theorem:1} and \ref{theorem:2}, we obtain complete ordering for all $m \in \{1,2,3,4,5\}$. The products $\mathcal{K}_{m}^{s}\mathcal{K}_{m}^{d}$ satisfy:
\begin{equation} \label{eq:complete_ordering}
    0 = \mathcal{K}_{1}^{s} \mathcal{K}_{1}^{d} < \mathcal{K}_{m}^{s} \mathcal{K}_{m}^{d} < \mathcal{K}_{2}^{s} \mathcal{K}_{2}^{d} = 1, \quad m \in \{3,4,5\} \quad 
    % \square
\end{equation}
\end{proof}

\subsection{Proof of Theorem 4}\label{proof:thm4}
% Repeat Theorem 4
\TheoremFour*

%% Proof of Theorem 4
\begin{proof} 
Let us establish the strict inequality $\mathcal{K}_{4}^{s} \mathcal{K}_{4}^{d} > \mathcal{K}_{3}^{s} \mathcal{K}_{3}^{d}$ through direct comparison. From definitions \ref{eq:similarity} and \ref{eq:dissimilarity}, we have:
\begin{equation}
    \mathcal{K}_{4}^{s} \mathcal{K}_{4}^{d} = \frac{|\tilde{\boldsymbol{y}}_{p_4}|}{|\tilde{\boldsymbol{y}}_{i}|}
	> \mathcal{K}_{3}^{s} \mathcal{K}_{3}^{d} = \frac{|\tilde{\boldsymbol{y}}_{p_3}-\tilde{\boldsymbol{y}}_{p_3}^{d}|}{|\tilde{\boldsymbol{y}}_{i}|} \cdot \frac{1}{1 + |\tilde{\boldsymbol{y}}_{p_3}^{d}|} \nonumber
\end{equation}
$\Rightarrow$ 
\begin{equation*}
    \frac{|\tilde{\boldsymbol{y}}_{p_4}|(1 + |\tilde{\boldsymbol{y}}_{p_3}^{d}|)}{|\tilde{\boldsymbol{y}}_{i}|(1 + |\tilde{\boldsymbol{y}}_{p_3}^{d}|)} >
    \frac{|\tilde{\boldsymbol{y}}_{p_4}-\tilde{\boldsymbol{y}}_{p_3}^{d}|}{|\tilde{\boldsymbol{y}}_{i}| (1 + |\tilde{\boldsymbol{y}}_{p_3}^{d}|)} 
    % \nonumber
\end{equation*}
$\Rightarrow$ 
\begin{equation*}
    |\tilde{\boldsymbol{y}}_{p_4}|(1 + |\tilde{\boldsymbol{y}}_{p_3}^{d}|) >
    |\tilde{\boldsymbol{y}}_{p_3}-\tilde{\boldsymbol{y}}_{p_3}^{d}| 
    % \nonumber
\end{equation*}

By the cardinality constraint \ref{eq:card-constraint-4} in the theorem:
\begin{equation*}
    |\tilde{\boldsymbol{y}}_{p_3}|(1 + |\tilde{\boldsymbol{y}}_{p_3}^{d}|) >
    |\tilde{\boldsymbol{y}}_{p_3}-\tilde{\boldsymbol{y}}_{p_3}^{d}| 
\end{equation*}
where the strict inequality follows from the fact that for any positive real numbers $a,b >0$:
\begin{equation*}
    a(1+b) > a-b
\end{equation*}
This inequality holds trivially, thereby establishing the original claim $\mathcal{K}_{4}^{s} \mathcal{K}_{4}^{d} > \mathcal{K}_{3}^{s} \mathcal{K}_{3}^{d}$. 
\end{proof}

\subsection{Proof of Theorem 5}\label{proof:thm5}
% Repeat Theorem 5
\TheoremFive*

%% Proof of Theorem 5
\begin{proof}
From definitions \ref{eq:similarity} and \ref{eq:dissimilarity}, we have:
\begin{align*}
    \mathcal{K}_{3}^{s} \mathcal{K}_{3}^{d} &= \frac{|\tilde{\boldsymbol{y}}_{p_3}-\tilde{\boldsymbol{y}}_{p_3}^{d}|}{|\tilde{\boldsymbol{y}}_{i}|} \cdot \frac{1}{1 + |\tilde{\boldsymbol{y}}_{p_3}^{d}|}  \\
    \mathcal{K}_{5}^{s} \mathcal{K}_{5}^{d} &= \frac{1}{1 + |\tilde{\boldsymbol{y}}_{p_5}^{d}|} \label{eq:k5}
\end{align*}

Taking the ratio:
\begin{equation*} \label{eq:ratio}
    \frac{\mathcal{K}_{5}^{s} \mathcal{K}_{5}^{d}}{\mathcal{K}_{3}^{s} \mathcal{K}_{3}^{d}} = 
    \frac{|\tilde{\boldsymbol{y}}_{i}|(1 + |\tilde{\boldsymbol{y}}_{p_3}^{d}|)}{|\tilde{\boldsymbol{y}}_{p_3}-\tilde{\boldsymbol{y}}_{p_3}^{d}|(1 + |\tilde{\boldsymbol{y}}_{p_5}^{d}|)}
\end{equation*}

By the properties of cardinality and set difference:
\begin{equation*} \label{eq:set_bound}
    |\tilde{\boldsymbol{y}}_{p_3}-\tilde{\boldsymbol{y}}_{p_3}^{d}| \leq |\tilde{\boldsymbol{y}}_{i}|
\end{equation*}

Given the constraint \ref{eq:card-constraint-5}, $|\tilde{\boldsymbol{y}}_{p_5}^{d}| \leq |\tilde{\boldsymbol{y}}_{p_3}^{d}|$, we have:
\begin{equation*} \label{eq:final_step}
    \frac{|\tilde{\boldsymbol{y}}_{i}|(1 + |\tilde{\boldsymbol{y}}_{p_3}^{d}|)}{|\tilde{\boldsymbol{y}}_{p_3}-\tilde{\boldsymbol{y}}_{p_3}^{d}|(1 + |\tilde{\boldsymbol{y}}_{p_5}^{d}|)} > 1
\end{equation*}

Therefore, $\mathcal{K}_{5}^{s} \mathcal{K}_{5}^{d} > \mathcal{K}_{3}^{s} \mathcal{K}_{3}^{d}$.
\end{proof}

\section{Experimental Setup}\label{app:exp}
\subsection{Computing Cost Analysis} \label{sec:cost}
The key computation in all supervised contrastive loss functions, including ours, is the similarity between representations and the size of the positive set $\mathcal{P}(i)$ for each anchor $i$. All methods share the same core structure:
$$
    \mathcal{L}_{i}^{} = \frac{-1}{|\mathcal{P}(i)|} \sum_{p \in \mathcal{P}(i)} \log \frac{\exp(\boldsymbol{z}_{i} \cdot \boldsymbol{z}_{p} / \tau)}{\sum_{a \in \mathcal{A}(i)} \exp(\boldsymbol{z}_{i} \cdot \boldsymbol{z}_{a} / \tau)}
$$

Our method introduces a reweighting factor $\mathcal{K}_{i,p}^{s} \mathcal{K}_{i,p}^{d} \in [0,1]$ only in the numerator of the positive terms:
$$
    \mathcal{L}_{i}^{\text{our}} = \frac{-1}{|\mathcal{P}(i)|} \sum_{p \in \mathcal{P}(i)} \log \frac{\mathcal{K}_{i,p}^{s} \mathcal{K}_{i,p}^{d} \exp(\boldsymbol{z}_{i} \cdot \boldsymbol{z}_{p} / \tau)}{\sum_{a \in \mathcal{A}(i)} \exp(\boldsymbol{z}_{i} \cdot \boldsymbol{z}_{a} / \tau)}
$$

Importantly:
\begin{itemize}
    \item The denominator (negative set) is unchanged across ALL, ANY, MulSupCon, and our method.
    \item The positive set $\mathcal{P}(i)$ for our method is structurally identical to ANY, typically larger than ALL but smaller or equal to MulSupCon, which forms multiple sets.
    \item The computation of $\mathcal{K}_{i,p}^{s} = \frac{|\mathcal{S} \cap \mathcal{T}|}{|\mathcal{S}|}$ and $\mathcal{K}_{i,p}^{d} = \frac{1}{1+|\mathcal{T} \setminus (\mathcal{S} \cap \mathcal{T})|}$ involves simple set operations on one-hot label vectors, which are efficient and linear in the number of labels.
    \item No additional encoder forward passes or model parameters are introduced.
\end{itemize}

Empirically, memory and runtime cost for the Similarity-Dissimilarity Loss:
\begin{itemize}
    \item Does not increase memory usage, as it does not alter the encoder architecture or increase feature dimensionality.
    \item Adds minimal runtime cost, primarily due to lightweight scalar computations per positive pair (i.e., computing $\mathcal{K}_{i,p}^{s}\mathcal{K}_{i,p}^{d}$).
\end{itemize}

In contrast, methods like MulSupCon may require repeated label-wise positive sets and multiple forward loss computations per label, depending on implementation. Thus, our method maintains comparable or better efficiency while offering a more principled weighting mechanism.

\subsection{Datasets and Metrics} \label{sec:datasets and metrics}

To rigorously evaluate the efficacy of our proposed loss function, we conducted comprehensive experiments across three distinct data modalities: visual data, textual data, and specialized medical corpus data (MIMIC datasets). The MIMIC datasets are particularly noteworthy for their exceptionally large label space and pronounced long-tailed distributions \citep{huang2024explainable}. This long-tailed characteristic, which is especially prevalent in multi-label classification scenarios, facilitates a robust assessment of  the performance of our loss function across heterogeneous data distributions. Comprehensive statistical analyses of all experimental datasets are presented in Table \ref{tab:statistics}.

\begin{itemize}
    \item \textbf{MS-COCO} (Microsoft Common Objects in Context) \citep{lin2014microsoft} consists of over 330,000 images annotated across 80 object categories, providing rich semantic information for object detection, segmentation, and captioning tasks that has significantly advanced computer vision research since its introduction by Microsoft.
    \item \textbf{PASCAL VOC} \citep{everingham2010pascal} contains 9,963 natural images with standardized annotations spanning 20 object categories, enabling rigorous evaluation of classification, detection, and segmentation algorithms in computer vision.
    \item \textbf{NUS-WIDE} \citep{chua2009nus} is a large-scale web image collection comprising approximately 269,000 Flickr images annotated with 81 concept categories and user tags, widely used as a benchmark for multi-label image classification.
    \item \textbf{AAPD} (Arxiv Academic Paper Dataset) \citep{yang2018sgm} is a text corpus containing 55,840 scientific paper abstracts from arXiv with multi-label annotations across various subject categories, designed specifically for benchmarking multi-label text classification and document categorization algorithms.
    \item \textbf{MIMIC-III} \footnote{We are granted access to MIMIC-III Clinical Database (v1.4)} \citep{johnson2016mimic} includes records labeled with expert-annotated ICD-9 codes, which identify diagnoses and procedures. We adhere to the same splits as in previous works \citep{mullenbach2018explainable}, employing two settings: MIMIC-III-Full, which includes all ICD-9 codes, and MIMIC-III-50, which includes only the 50 most frequent codes.
    \item \textbf{MIMIC-IV} \footnote{We are granted access to MIMIC-IV (v2.2)} \citep{johnson2020mimic} contains records annotated with both ICD-9 and ICD-10 codes, where each code is subdivided into sub-codes that often capture specific circumstantial details. we follow prior studies \citep{nguyen2023mimic} and utilize four settings: MIMIC-IV-ICD9-Full, MIMIC-IV-ICD9-50, MIMIC-IV-ICD10-Full, and MIMIC-IV-ICD10-50.
\end{itemize}

\textbf{Metrics}. Consistent with prior research \citep{mullenbach2018explainable, nguyen2023mimic}, we report macro/micro-AUC, macro/micro-F1, and precision at K (P@$\mathcal{K}$) metrics on MIMIC datasets, where $\mathcal{K} = \{5, 8\}$ for different settings. Moreover, micro/macro-F1 and mAP are used for image datasets following \citep{he2020momentum,zhang2024multi,audibert2024exploring}.

%% Statistics of datasets
\begin{table*}[t]
\centering
\renewcommand{\arraystretch}{1.2}
\caption{Statistics of datasets. }
\label{tab:statistics}
\begin{tabular}{@{}lccccc@{}}
\toprule
\multicolumn{1}{c}{Dataset} & Train   & Val   & Test   & Total \# labels & Avg \# labels \\ \midrule
MS-COCO                     & 82.0k   & 20.2k & 20.2k  & 80              & 2.9           \\
PASCAL                      & 5.0k    & 2.5k  & 2.5k   & 20              & 1.5           \\
NUS-WIDE                    & 125.4k  & 41.9k & 41.9k  & 81              & 2.4           \\ \midrule
AAPD                        & 37.8k   & 6.7k  & 11.3k  & 54              & 2.4           \\ \midrule
MIMIC-III-Full              & 47,723  & 1,631 & 3,372  & 8,692           & 15.7          \\
MIMIC-III-50                & 8,066   & 1,573 & 1,729  & 50              & 5.7           \\
MIMIC-IV-ICD9-Full          & 188,533 & 7,110 & 13,709 & 11,145          & 13.4          \\
MIMIC-IV-ICD9-50            & 170,664 & 6,406 & 12,405 & 50              & 4.7           \\
MIMIC-IV-ICD10-Full         & 110,442 & 4,017 & 7,851  & 25,230          & 16.1          \\
MIMIC-IV-ICD10-50           & 104,077 & 3,805 & 7,368  & 50              & 5.4           \\ \bottomrule
\end{tabular}
\end{table*}

\subsection{Baselines and Encoders} \label{sec:baselines and encoders}

We compare the proposed Similarity-Dissimilarity Loss with representative supervised contrastive learning objectives for multi-label classification. These baselines cover different strategies for defining positive pairs and weighting multi-label relations.
\begin{itemize}
    \item ALL method regards a sample as positive only when its label set is exactly identical to that of the anchor.
    \item ANY strategy relaxes the positive-pair definition by considering a sample positive if it shares at least one label with the anchor. 
    \item MulSupCon \citep{zhang2024multi} considers each label independently and aggregates contrastive objectives across labels instead of forming a single positive set for each anchor. This design allows samples sharing multiple labels with the anchor to contribute more than samples sharing fewer labels.
    \item Jaccard \citep{lin2023effective} weights positive pairs according to the Jaccard similarity between their label sets.
    \item Class prototypes \citep{gupta2023class} represents each class using a prototype and encourages sample representations to align with the prototypes associated with their labels. This method introduces class-level semantic anchors rather than relying solely on instance-to-instance contrast.
    \item MSC \citep{audibert2024exploring} highlights the importance of considering the long-tailed distribution of data, addressing issues such as the “lack of positives” and the “attraction-repulsion imbalance” for multi-label scenarios.
    \item We include previous state-of-the-art medical methods as baselines (see Table \ref{tab:mimic-sota}).
\end{itemize}

For experimental evaluation, we employ modality-specific encoder architectures tailored to each data type. For image data, ResNet-50 \citep{he2016deep} serves as the encoder architecture, consistent with established previous studies \citep{he2020momentum,chen2020simple,zhang2024multi}, which is a main and common encoder in previous works. For textual data, we utilize pre-trained large language models (LLMs), specifically RoBERTa-base \citep{liu2019roberta} and Llama-3.1-8B \citep{grattafiori2024llama} with Low-Rank Adaptation (LoRA) \citep{hu2022lora}. Additionally, for the specialized task of ICD coding on MIMIC datasets, we implement PLM-ICD \citep{huang2022plm}, a model specifically designed for ICD coding using LLMs.

% ResNet-50, 25.6M
% RoBERTa-base 125M
% Llama-3.2-1B 8B
% DINO v2 

\subsection{Implementation Details} \label{sec:implementation}

Within the MSCL framework, we implement a two-phase training method as established by Khosla \citep{khosla2020supervised}: (i) encoder training, wherein the model learns to generate vector representations that maximize similarity between instances of the same class while distinguishing them from other classes; and (ii) classifier training, which utilizes the 
trained encoder and freeze it to train the classifier. 
All the experiments are conducted on 2 $\times$ Nvidia A6000 48GB.

In the representation training, we use a standard cosine learning rate scheduler with a $0.05$ warm-up period and set the temperature $\tau = 0.07$. The projection head comprises two MLP layers with ReLU activation function and employs contrastive loss function for the training, where the projected representation $\boldsymbol{z}_{k} = Proj(Enc(\tilde{\boldsymbol{x}}_{k})) \in \mathbb{R}^{D_{P}}$. Here $h = Enc(\tilde{\boldsymbol{x}}_{k})$ denotes the encoded feature vectors and the projection dimension $D_{P} = 256$. 
For subsequent classifier training, the projection head is removed, a linear layer is appended to the frozen encoder, and binary cross-entropy (BCE) loss is utilized for optimization.

For image data, we employ ResNet-50 using stochastic gradient descent (SGD) with momentum. The input images are set up at a resolution of $224 \times 224$ pixels. 
For text data, RoBERTa-base and Llama-3.1-8B serve as backbone encoders implemented via Huggingface platform \citep{wolf2020transformers}. RoBERTa configures with a dropout rate of $0.1$ and AdamW optimizer with a weight decay of $0.01$, exempting bias and LayerNorm from weight decay. Compared with full-parameter fine-tuning, we employ LoRA \citep{hu2022lora} to efficiently fine-tune large model Llama. LoRA configures with the low-rank dimension $r = 16$, scaling factor $\alpha = 32$ and dropout as $0.1$. There is no KV cache to save memory during training. To enhance computational efficiency, BFloat16 precision is used for the training. The hyperparameters and detailed configuration are shown our code.
% \url{https://github.com/guangminghuang/similarity-dissimilarity-loss}

\section{More Detailed Results and Analysis} \label{app:results-analysis}

\subsection{Detailed Results and Analysis on Image} \label{sec:image}
Table \ref{tab:results-image} reports that our proposed method outperforms baselines across almost metrics on the image benchmarks (MS-COCO, PASCAL, and NUS-WIDE) , with the sole exception of a lower mAP than Jaccard and MSC on PASCAL, demonstrating the effectiveness of explicitly modeling label information in MSCL.

% The experimental results in Table \ref{tab:results-image} demonstrate that our proposed loss function outperforms baselines across all metrics, including micro-F1, macro-F1, and mAP, on all image datasets (MS-COCO, PASCAL, and NUS-WIDE). Compared to MulSulCon, our method achieves significant improvements of $2.07/3.78/1.51$ in micro-/macro-F1/mAP on MS-COCO and $1.47/3.13/4.27$ on NUS-WIDE.

Figure \ref{figure-improve} illustrates the comparison between Similarity-Dissimilarity Loss and MulSupCon as measured by micro- and macro-F1 metrics. The results indicate that our method yields substantially greater improvements in macro-F1 compared to micro-F1 across all image datasets. 
Specifically, macro-F1 increases by $5.7\%$ on MS-COCO and $5.8\%$ on NUS-WIDE, whereas micro-F1 exhibits more modest improvements of $2.9\%$ and $2.0\%$, respectively. Macro-F1 assigns equal importance to each class regardless of its frequency, rendering it particularly appropriate for evaluating performance on imbalanced datasets where minority class prediction accuracy is critical \citep{lecun2015deep,zhang2023deep}. In contrast, micro-F1 places more considerable weight on classes with more samples, making it more appropriate when larger classes should have a more potent influence on the overall score \citep{lecun2015deep,lin2017focal}. Multi-label classification inherently faces more pronounced challenges with long-tailed distributions than single-label classification due to exponential output space complexity, intricate label co-occurrence patterns, and high annotation costs \citep{zhang2023deep}. The observed superior improvement in macro-F1 metrics provides compelling evidence that our method demonstrates exceptional efficacy in addressing long-tailed distribution challenges, a capability particularly crucial in multi-label scenarios.

However, on the PASCAL dataset, our method demonstrates mere marginal improvements, with gains of $0.88/0.84/0.17$ in micro/macro-F1/mAP, respectively. This limited enhancement can be attributed to the structural characteristics of PASCAL, wherein the average number of labels per instance is approximately $1.5$ (as detailed in Table \ref{tab:statistics}), causing the task to approximate single-label classification, particularly when the batch size is limited \citep{khosla2020supervised}. Consequently, loss functions specifically designed for multi-label scenarios exert minimal influence on model performance under these conditions. As  \citet{audibert2024exploring} have demonstrated, the cardinality of the label space constitutes a significant determinant of model efficacy within MSCL .

Furthermore, Figure \ref{figure-image} reveals that the standard deviation across four methods for PASCAL equals $0.58/0.64/0.27$ in micro/macro-F1/mAP, which are considerably lower than the corresponding standard deviations observed for the MS-COCO and NUS-WIDE. This statistical finding suggests that the efficacy of specialized multi-label loss functions diminishes significantly when the average label cardinality per instance approaches $1$ in MSCL. This finding further corroborates our theoretical analysis and hypothesis in the Section \ref{sec:methods}, wherein Similarity-Dissimilarity Loss degenerates to single-label scenarios (see Equation \ref{eq:condition}).

\subsection{Detailed Results and Analysis on Medical Domain} \label{sec:results on medical domain}

We extend and evaluate our method on the medical domain, specifically for ICD coding. The results in Tables \ref{tab:results-mimic-full} and \ref{tab:results-mimic-full} demonstrate that our proposed loss function consistently surpasses baselines across all metrics in a comprehensive evaluation, considering: (i) Diverse data distribution: full setting (long-tailed distribution) and top-50 frequent labels setting; (ii) Model architectures: RoBERTa, LLaMA, and domain-specialized PLM-ICD; and (iii) ICD code versions: ICD-9 and ICD-10. The consistent performance improvements observed across these multidimensional evaluation criteria provide substantial empirical evidence for the efficacy and generalizability of our proposed approach.

In the full setting, macro-F1 performance exhibits considerably lower compared to micro-F1, whereas the top-50 setting achieves approximately equal macro and micro-F1 scores. This disparity indicates that extreme long-tailed distributions remain challenging for both the MSCL framework and our method, despite the improvements achieved.

Table \ref{tab:results-mimic-full} reports that our method achieves superior results on MIMIC-IV-ICD9-Full compared to MIMIC-III-Full, despite both datasets employing identical ICD-9 coding standards. This marked performance differential can be attributed primarily to the more extensive training corpus available in MIMIC-IV-ICD9-Full (see in Table \ref{tab:statistics}). While MIMIC-IV-ICD10-Full similarly comprises a substantial volume of clinical data, its considerably expanded label taxonomy introduces increased representational sparsity and presents additional computational and methodological challenges \citep{nguyen2023mimic}. Moreover, the MIMIC-IV-ICD10-50 dataset demonstrates consistent performance metrics in this restricted setting, providing empirical evidence that label space dimensionality constitutes a critical determinant of model training efficacy.

Comparative analysis of model performance reveals that Llama significantly outperforms RoBERTa across evaluation metrics, a finding attributable to scaling laws of LLMs and the extensive knowledge and training corpus during the pre-training phase \citep{kaplan2020scaling, bahri2024explaining,huang2024prompting,huang2025improving}. Although LLMs demonstrate considerable efficacy in domain-specific applications \citep{nori2023can}, our results indicate that PLM-ICD consistently surpasses both RoBERTa and Llama across all experimental configurations. This hierarchical performance pattern aligns with theoretical expectations, as PLM-ICD incorporates architecture and training paradigms specifically optimized for automated ICD coding tasks \citep{huang2022plm}. Despite the increasing generalization capabilities of foundation models in diverse applications, significant questions persist regarding their capacity to achieve state-of-the-art performance on highly specialized tasks, particularly within the medical domain, without substantial domain-specific training or parameter-efficient adaptation techniques \citep{saab2024capabilities}. Contemporary research on foundation model applications in biomedical domain has predominantly relied on specialized adaptation methods tailored to specific domain requirements. The comparative advantages of domain-specific pre-training becomes particularly evident following the development of initial foundation model architectures, as exemplified by widely implemented medical models such as Med-PaLM \citep{singhal2023large} and Med-Gemini\citep{saab2024capabilities}.

Therefore, compared with the enhancements via the contrastive training phase, the intrinsic knowledge within LLMs contributes substantially more to ICD coding efficacy. In particular, domain-specific knowledge representations emerge as critical factors of LLMs performance in medical applications.

% tab:results-mimi-full
\begin{sidewaystable}[htbp]
\centering
\renewcommand{\arraystretch}{1.2}
\caption{Results on MIMIC-III-Full, MIMIC-IV-ICD9-Full and MIMIC-IV-ICD10-Full test sets. The best scores among backbone encoder models are marked in bold.}
\label{tab:results-mimic-full}
\begin{tabular}{@{}lccccccccccccccc@{}}
\toprule
                         & \multicolumn{5}{c}{\textbf{MIMIC-III-Full}}                                               & \multicolumn{5}{c}{\textbf{MIMIC-IV-ICD9-Full}}                                           & \multicolumn{5}{c}{\textbf{MIMIC-IV-ICD10-Full}}                                         \\
                         \cmidrule(lr){2-6} \cmidrule(lr){7-11} \cmidrule(lr){12-16}
                         & \multicolumn{2}{c}{AUC}         & \multicolumn{2}{c}{F1}          &                       & \multicolumn{2}{c}{AUC}         & \multicolumn{2}{c}{F1}          &                       & \multicolumn{2}{c}{AUC}         & \multicolumn{2}{c}{F1}         &                       \\
\multirow{-3}{*}{Method} & Macro          & Micro          & Macro          & Micro          & \multirow{-2}{*}{P@8} & Macro          & Micro          & Macro          & Micro          & \multirow{-2}{*}{P@8} & Macro          & Micro          & Macro         & Micro          & \multirow{-2}{*}{P@8} \\ \midrule
\multicolumn{16}{c}{\cellcolor[HTML]{EFEFEF}\textbf{RoBERTa}}                                                                                                                                                                                                                                               \\
ALL                      & 89.87          & 95.83          & 7.94           & 53.08          & 71.06                 & 93.04          & 98.57          & 11.76          & 57.73          & 64.75                 & 89.46          & 98.19          & 4.23          & 53.82          & 64.17                 \\
ANY                      & 88.15          & 94.18          & 7.13           & 51.35          & 68.92                 & 92.86          & 98.14          & 11.17          & 57.42          & 64.48                 & 89.09          & 98.07          & 4.02          & 52.28          & 62.65                 \\
MulSupCon                & 90.37          & 96.38          & 8.64           & 54.16          & 71.24                 & 93.87          & 99.34          & 12.83          & 58.67          & 65.89                 & 90.53          & 98.74          & 4.56          & 54.09          & 65.46                 \\
Ours                     & \textbf{90.78} & \textbf{96.67} & \textbf{9.19}  & \textbf{54.63} & \textbf{71.38}        & \textbf{94.13} & \textbf{99.36} & \textbf{13.08} & \textbf{58.85} & \textbf{66.29}        & \textbf{90.68} & \textbf{98.86} & \textbf{4.72} & \textbf{54.89} & \textbf{66.07}        \\ \midrule
\multicolumn{16}{c}{\cellcolor[HTML]{EFEFEF}\textbf{Llama}}                                                                                                                                                                                                                                                 \\
ALL                      & 91.27          & 96.94          & 8.38           & 54.75          & 72.63                 & 94.52          & 98.93          & 12.34          & 58.97          & 66.35                 & 90.78          & 98.57          & 4.53          & 54.98          & 65.31                 \\
ANY                      & 90.64          & 96.38          & 7.82           & 53.97          & 71.85                 & 94.19          & 98.74          & 11.93          & 58.68          & 65.92                 & 90.36          & 98.32          & 4.37          & 54.24          & 64.78                 \\
MulSupCon                & 91.68          & 97.23          & 8.79           & 55.36          & 72.94                 & 94.87          & 99.42          & 12.96          & 59.35          & 66.73                 & 91.15          & 98.97          & 4.72          & 55.29          & 66.16                 \\
Ours                     & \textbf{91.93} & \textbf{97.57} & \textbf{9.26}  & \textbf{55.87} & \textbf{73.28}        & \textbf{95.14} & \textbf{99.58} & \textbf{13.37} & \textbf{59.69} & \textbf{67.14}        & \textbf{91.38} & \textbf{99.15} & \textbf{4.94} & \textbf{55.67} & \textbf{66.59}        \\ \midrule
\multicolumn{16}{c}{\cellcolor[HTML]{EFEFEF}\textbf{PLM-ICD}}                                                                                                                                                                                                                                                   \\
ALL                      & 92.58          & 98.69          & 10.73          & 60.06          & 76.84                 & 96.95          & 99.28          & 14.18          & 62.83          & 70.53                 & 91.87          & 98.79          & 4.83          & 57.36          & 69.29                 \\
ANY                      & 91.09          & 97.36          & 9.24           & 58.87          & 75.38                 & 95.85          & 98.17          & 12.64          & 61.82          & 69.58                 & 90.54          & 97.72          & 4.54          & 55.86          & 68.17                 \\
MulSupCon                & 93.46          & 99.13          & 11.68          & 61.42          & 77.65                 & 97.86          & 99.32          & 14.47          & 64.23          & 71.97                 & 92.83          & 99.38          & 5.43          & 58.15          & 70.19                 \\
Ours                     & \textbf{94.47} & \textbf{99.43} & \textbf{12.46} & \textbf{62.34} & \textbf{78.42}        & \textbf{98.47} & \textbf{99.59} & \textbf{15.04} & \textbf{64.95} & \textbf{72.95}        & \textbf{93.75} & \textbf{99.57} & \textbf{5.74} & \textbf{58.76} & \textbf{70.79}        \\ \bottomrule
\end{tabular}
\end{sidewaystable}

% tab:results-mimic-50
\begin{sidewaystable}[htbp]
\centering
\renewcommand{\arraystretch}{1.2}
\caption{Results on MIMIC-III-50, MIMIC-IV-ICD9-50 and MIMIC-IV-ICD10-50 test sets. The best scores among backbone encoder models are marked in bold.}
\label{tab:results-mimic-50}
\begin{tabular}{@{}lccccccccccccccc@{}}
\toprule
                         & \multicolumn{5}{c}{\textbf{MIMIC-III-50}}                                                 & \multicolumn{5}{c}{\textbf{MIMIC-IV-ICD9-50}}                                             & \multicolumn{5}{c}{\textbf{MIMIC-IV-ICD10-50}}                                            \\
                         \cmidrule(lr){2-6} \cmidrule(lr){7-11} \cmidrule(lr){12-16}
                         & \multicolumn{2}{c}{AUC}         & \multicolumn{2}{c}{F1}          &                       & \multicolumn{2}{c}{AUC}         & \multicolumn{2}{c}{F1}          &                       & \multicolumn{2}{c}{AUC}         & \multicolumn{2}{c}{F1}          &                       \\
\multirow{-3}{*}{Method} & Macro          & Micro          & Macro          & Micro          & \multirow{-2}{*}{P@5} & Macro          & Micro          & Macro          & Micro          & \multirow{-2}{*}{P@5} & Macro          & Micro          & Macro          & Micro          & \multirow{-2}{*}{P@5} \\ \midrule
\multicolumn{16}{c}{\cellcolor[HTML]{EFEFEF}\textbf{RoBERTa}}                                                                                                                                                                                                                                                \\
ALL                      & 87.73          & 90.57          & 57.38          & 61.84          & 61.29                 & 93.84          & 94.46          & 67.63          & 72.24          & 60.92                 & 91.43          & 93.52          & 64.86          & 67.65          & 60.07                 \\
ANY                      & 87.36          & 89.42          & 56.25          & 60.83          & 60.32                 & 93.37          & 93.73          & 67.39          & 71.97          & 60.24                 & 90.06          & 92.03          & 64.09          & 66.54          & 58.08                 \\
MulSupCon                & 88.02          & 91.24          & 57.83          & 62.26          & 61.53                 & 94.73          & 95.28          & 68.63          & 73.32          & 61.98                 & 92.09          & 93.95          & 65.43          & 68.54          & 61.36                 \\
Ours                     & \textbf{88.86} & \textbf{93.14} & \textbf{60.03} & \textbf{62.43} & \textbf{62.06}        & \textbf{94.92} & \textbf{95.43} & \textbf{69.05} & \textbf{73.54} & \textbf{62.23}        & \textbf{92.43} & \textbf{94.34} & \textbf{66.07} & \textbf{70.24} & \textbf{62.09}        \\ \midrule
\multicolumn{16}{c}{\cellcolor[HTML]{EFEFEF}\textbf{Llama}}                                                                                                                                                                                                                                                  \\
ALL                      & 88.93          & 91.67          & 60.32          & 64.58          & 62.87                 & 94.32          & 95.28          & 69.18          & 73.72          & 61.42                 & 92.35          & 94.57          & 66.38          & 69.83          & 61.75                 \\
ANY                      & 88.57          & 91.09          & 59.72          & 64.03          & 62.19                 & 94.05          & 94.85          & 68.79          & 73.19          & 61.07                 & 91.89          & 93.97          & 65.82          & 69.09          & 61.12                 \\
MulSupCon                & 89.21          & 92.13          & 60.85          & 65.12          & 63.23                 & 94.74          & 95.83          & 69.76          & 74.46          & 61.95                 & 92.73          & 95.12          & 66.85          & 70.57          & 62.43                 \\
Ours                     & \textbf{89.54}          & \textbf{92.49}          & \textbf{61.32}          & \textbf{65.67}          & \textbf{63.69}                 & \textbf{94.97}          & \textbf{96.07}          & \textbf{70.21}          & \textbf{74.87}          & \textbf{62.32}                 & \textbf{93.07}          & \textbf{95.53}          & \textbf{67.23}          & \textbf{71.23}          & \textbf{62.81}                 \\ \midrule
\multicolumn{16}{c}{\cellcolor[HTML]{EFEFEF}\textbf{PLM-ICD}}                                                                                                                                                                                                                                                    \\
ALL                      & 90.13          & 93.02          & 65.18          & 69.43          & 65.26                 & 95.18          & 96.42          & 71.31          & 75.83          & 62.45                 & 93.53          & 95.97          & 68.96          & 73.14          & 64.52                 \\
ANY                      & 89.03          & 92.07          & 63.73          & 68.14          & 63.84                 & 93.73          & 95.34          & 70.23          & 74.43          & 61.42                 & 92.27          & 94.42          & 67.95          & 71.83          & 63.17                 \\
MulSupCon                & 91.23          & 94.04          & 66.17          & 70.32          & 66.42                 & 96.32          & 97.63          & 72.64          & 76.93          & 63.83                 & 94.43          & 97.32          & 70.15          & 74.23          & 65.63                 \\
Ours                     & \textbf{91.82} & \textbf{94.63} & \textbf{67.15} & \textbf{71.07} & \textbf{67.32}        & \textbf{97.28} & \textbf{98.32} & \textbf{73.52} & \textbf{77.84} & \textbf{64.82}        & \textbf{94.93} & \textbf{97.85} & \textbf{70.62} & \textbf{75.14} & \textbf{66.23}        \\ \bottomrule
\end{tabular}
\end{sidewaystable}

\section{Robustness Analysis} \label{app:robustness}

\subsection{Long-tail Frequency Bucket Analysis}

To further analyze the behavior of the proposed Similarity-Dissimilarity Loss under long-tailed label distributions, we conduct a frequency bucket evaluation on MS-COCO. This analysis is motivated by the inherent imbalance in multi-label datasets, which typically contain a few frequent categories alongside a much larger number of infrequent ones. In such settings, high-quality positive-pair construction is critical because tail categories are highly sensitive to noisy or weak positive signals.

We construct the buckets based on the instance distribution of the MS-COCO training set. Specifically, all 80 categories are sorted in descending order of their training instance counts. We then compute the cumulative proportion of instances across categories and divide the label space into two groups:
\begin{itemize}
    \item Head bucket: Contains the 23 categories that cumulatively account for the top 60\% of training instances.
    \item Tail bucket: Contains the remaining 57 categories, which account for the remaining 40\% of training instances.
\end{itemize}
This partition reflects the long-tailed nature of MS-COCO, where a small fraction of frequent categories dominate the training distribution. Finally, we evaluate macro-F1 separately on the head and tail buckets to compare the proposed method against ALL, ANY, and MulSupCon.

As shown in Figure \ref{figure-frequency-bucket}, all methods perform better on head classes than on tail classes, as expected under long-tailed label distributions. Notably, the improvement of the proposed method is more pronounced on tail. Compared with MulSupCon, our method improves the macro-F1 by $1.39$ on the head and by $2.95$ on tail classes. This suggests that tail categories benefit more from the proposed relation-aware weighting mechanism.

These results are consistent with the motivation of the Similarity-Dissimilarity Loss. In long-tailed settings, tail labels have fewer positive instances and are therefore more vulnerable to ambiguous positive-pair assignments. Binary strategies such as ANY treat all overlapping samples uniformly, while MulSupCon mainly emphasizes shared labels without explicitly penalizing additional mismatched labels. In contrast, our method jointly considers shared and non-shared label information, thereby reducing the influence of weak or noisy positives. This leads to more reliable contrastive supervision for tail categories while preserving strong performance on head categories.

% figure
\begin{figure}[t]
    \centering
    \includegraphics[width=0.85\linewidth]{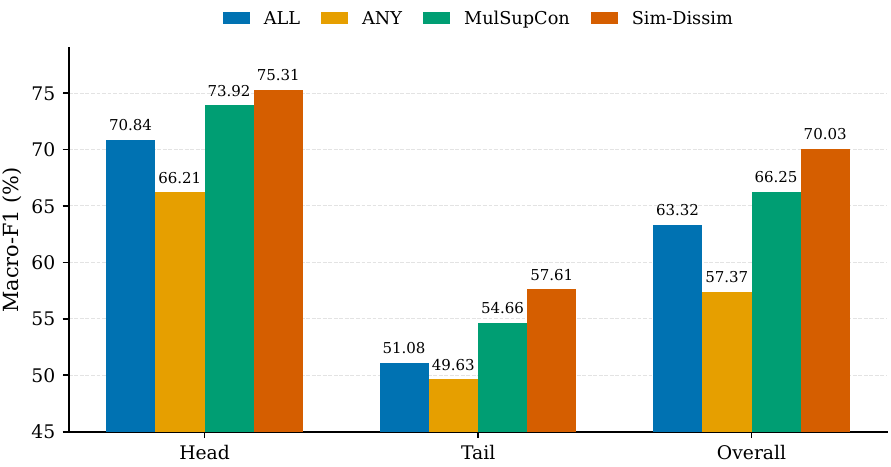}
    \caption{
    Long-tail frequency-bucket analysis on MS-COCO. The head bucket contains 23 classes accounting for the top 60\% of training instances, and the tail bucket contains the remaining 57 classes.
    }
    \label{figure-frequency-bucket}
\end{figure}

\subsection{Batch-size Sensitivity Analysis}

We conduct a batch-size sensitivity analysis on the MS-COCO validation set to examine whether the proposed method remains effective under different numbers of in-batch samples. Since supervised contrastive learning are sensitive that performance can be affected by batch size \citep{khosla2020supervised},  in particular for the multi-label settings where positive pairs may have different degrees of label overlap.

We vary the batch size over $\{32,64,128,256\}$ and compare the proposed loss to ALL, ANY, MulSupCon. As shown in Figure \ref{figure-batch}, our method consistently outperforms baselines across all evaluated batch sizes on Micro-F1, Macro-F1, and mAP. This demonstrates that the proposed loss is not only effective under a specific batch configuration, but also robust across different levels of in-batch positive availability.

The advantage of our proposed loss is particularly clear for Macro-F1, where it maintains a stable margin over the baselines across all batch sizes. Since Macro-F1 is more sensitive to rare categories, this result further supports the effectiveness of the proposed similarity-dissimilarity weighting under imbalanced multi-label distributions. Overall, the results indicate that explicitly modeling both shared and mismatched label information provides consistently stronger contrastive supervision than strict, binary, or overlap-only positive-pair strategies.

\begin{figure}[t]
    \centering
    \includegraphics[width=\linewidth]{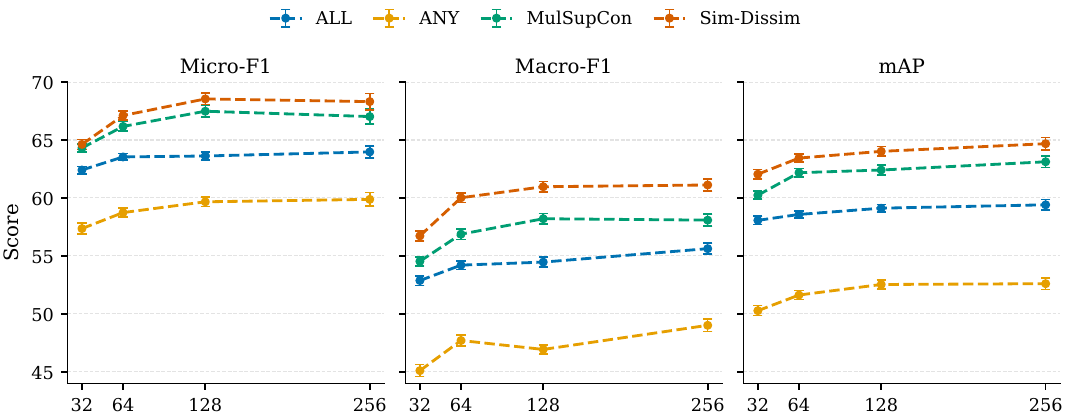}
    \caption{
    Batch-size sensitivity analysis on the MS-COCO validation set with batch sizes $\{32,64,128,256\}$.
    }
    \label{figure-batch}
\end{figure}

\section{Additional Discussion of Related Work} \label{sec:related work}
Multi-label classification presents significant challenges due to its inherent label correlations, extreme and sparse label spaces, and long-tailed distributions. For instance, in the International Classification of Diseases (ICD) \citep{edin2023automated,ji2024unified}, the presence of one label (e.g., "Pneumococcal pneumonia") may increase the probability of co-occurring labels (e.g., "fever" or "cough"). Furthermore, multi-label datasets frequently exhibit long-tailed distributions, where a small subset of labels occurs with high frequency while the majority appear rarely. This imbalance typically results in models that perform adequately on common labels but underperform on infrequent ones \citep{zhang2023deep,huang2023lida,wang2024towards}. Additionally, the number of potential label combinations increases exponentially with the number of labels, resulting in heightened computational complexity and substantial memory requirements.

Contrastive learning aims to learn a representation of data such that similar instances are close together in the representation space, while dissimilar instances are far apart. Compared to self-supervised contrastive learning, such as SimCLR \citep{chen2020simple} and MoCo \citep{he2020momentum}, \citet{khosla2020supervised} proposed supervised contrastive learning, which fully leverages class annotation information to enhance representations within the contrastive learning framework. 
Recent studies have extended supervised contrastive learning from single-label to multi-label scenarios by exploiting the additional information inherent in multi-label tasks. \citet{zhang2022use} proposed a hierarchical multi-label representation learning framework specifically designed to utilize comprehensive label information while preserving hierarchical inter-class relationships. 
\citet{gupta2023class} proposed a class-prototype-based supervised contrastive learning method for fine-grained multi-label educational video classification, where each class is represented by a learnable prototype and the objective encourages samples to be close to prototypes of their associated classes while being separated from prototypes of other classes.

In subsequent research, \citet{zhang2024multi} developed Multi-Label Supervised Contrastive Learning (MulSupCon), featuring a novel contrastive objective function that expands the positive sample set based on label overlap proportions. Similarly, the Jaccard Similarity Probability Contrastive Loss (JSPCL) \citep{lin2023effective} employed the Jaccard coefficient \citep{jaccard1912distribution} to calculate label similarity between instances, sharing conceptual foundations with MulSupCon \citep{zhang2024multi} and MSC loss \citep{audibert2024exploring} that those approaches primarily focus on similarity only, but ignoring dissimilarity.

Despite these advancements, the intricate relationships and dependencies between multi-label samples have yet to be fully elucidated. To address this gap, we introduce multi-label relations and formalize the concepts of similarity and dissimilarity. Inspired by the idea of re-weighting of logit adjustment \citep{menon2021longtail}, focal loss \citep{lin2017focal} and class-balanced loss \citep{cui2019class}, we leverage the similarity and dissimilarity factors to re-weight the contrastive loss, thereby enhancing discriminative power in multi-label scenarios.

\section{Limitations} \label{sec:limitation}

Our experimental analysis on the PASCAL dataset reveals a crucial insight: the significant advantages of the proposed method appear primarily in multi-label dense scenarios. On PASCAL, where the average number of labels per instance is only approximately $1.5$, the observed performance improvements are marginal compared to datasets such as MS-COCO or NUS-WIDE, which exhibit richer multi-label structures. This outcome is consistent with the theoretical underpinnings of our approach.  

The limited gains can be attributed to the structural characteristics of PASCAL. With relatively few labels per instance, the task effectively reduces to a near single-label classification problem, particularly under constrained batch sizes. As demonstrated in Equation~\ref{eq:condition}, under such conditions the proposed Similarity–Dissimilarity Loss degenerates to the standard supervised contrastive loss, thereby diminishing the benefit of re-weighting.  

More broadly, in datasets with low label cardinality, the scarcity of relational diversity limits the expressiveness of our framework. Consequently, the advantages of dynamically re-weighted contrastive objectives become less pronounced, underscoring the dependence of our method on datasets that exhibit complex and diverse multi-label interactions.

\end{document}